\DeclarePairedDelimiter{\norma}{\lVert}{\rVert}
\newcommand{\X}{{\cal X}}
\newcommand{\E}{{\cal E}}
\newcommand{\F}{{\cal F}}
\newcommand{\Y}{{\cal Y}}
\newcommand{\Z}{{\cal Z}}
\newcommand{\D}{{\mathfrak D}}
\newcommand{\B}{{\cal B}}
\renewcommand{\L}{{\cal L}}
\newcommand{\data}{{\bf z}}
\newcommand{\mdata}{{\bf Z}}
\newcommand{\datax}{{\bf x}}
\newcommand{\datay}{{\bf y}}
\newcommand{\R}{\mathbb{R}}
\newcommand{\tr}{\rm tr}
\newcommand{\ran}{\rm Ran}
\newcommand{\trans}{^{\scriptscriptstyle \top}}
\renewenvironment{proof}[1][\proofname]{\par
  \pushQED{\qed}%
  \normalfont \topsep6\p@\@plus6\p@\relax
  \trivlist
  \item[\hskip\labelsep
        \bfseries
    #1\@addpunct{.}]\ignorespaces
}{%
  \popQED\endtrivlist\@endpefalse
}
\def\eop{$\rule{1.3ex}{1.3ex}$}
\renewcommand\qedsymbol\eop
\newcommand{\cR}{{\cal R}}
\newcommand{\Exp}{\mathbb{E}}
\newcommand{\State}{\STATE}
\newcommand{\Dla}{{\D_\lambda}}
\newcommand{\argmin}[1]{\underset{#1}{\textrm{argmin}}~}
\declaretheorem[name=Theorem,refname=Thm.]{theorem}
\declaretheorem[name=Theorem,refname=Thm.,sibling=theorem]{theoremshortref}
\declaretheorem[name=Lemma,sibling=theorem]{lemma}
\declaretheorem[name=Proposition,refname=Prop.,sibling=theorem]{proposition}
\declaretheorem[name=Remark]{remark}
\declaretheorem[name=Definition,refname=Def.,sibling=theorem]{definition}
\declaretheorem[name=Assumption,refname=Asm.]{assumption}
\title{\sffamily\huge\bf Incremental Learning-to-Learn with Statistical Guarantees
\vspace{1.0truecm}
}
\author{ ~Giulia Denevi$^{1,2}$ ~~~~~ Carlo Ciliberto$^{3}$ ~~~~~ Dimitris Stamos$^{3}$ ~~~~~ Massimiliano Pontil $^{1,3}$ \\ {\small \hspace*{-2.0em} ~~giulia.denevi@iit.it~~~~~~~~c.ciliberto@ucl.ac.uk ~~~~ d.stamos.12@ucl.ac.uk ~~~~ massimiliano.pontil@iit.it}}
\begin{document}

\maketitle

\begin{abstract}
\noindent In learning-to-learn the goal is to infer a learning algorithm that works well on a class of tasks sampled from an unknown meta distribution. In contrast to previous work on batch learning-to-learn, we consider a scenario where tasks are presented sequentially and the algorithm needs to adapt incrementally to improve its performance on future tasks. Key to this setting is for the algorithm to rapidly incorporate new observations into the model as they arrive, without keeping them in memory. We focus on the case where the underlying algorithm is Ridge Regression parameterized by a positive semidefinite matrix. We propose to learn this matrix by applying a stochastic strategy to minimize the 
empirical error incurred by Ridge Regression on future tasks sampled from the meta distribution.
We study the statistical properties of the proposed algorithm and prove non-asymptotic bounds on its excess transfer risk, that is, the generalization performance on new tasks from the same meta distribution. We compare our online learning-to-learn approach with a state of the art batch method, both theoretically and empirically.
\end{abstract}

\section{INTRODUCTION}
\footnotetext[1]{Computational Statistics and Machine Learning, Istituto Italiano di Tecnologia, 16163 Genova, Italy}\footnotetext[2]{Department of Mathematics, University of Genova, 16146 Genova, Italy}\footnotetext[3]{Department of Computer Science, University College London, WC1E 6BT, London, UK}

Learning-to-learn (LTL) or meta learning aims at finding an algorithm that is best suited to address a class of learning problems (tasks). These tasks are sampled from an unknown meta distribution and are only partially observed via a finite collection of training examples, see \cite{baxter2000model,maurer2005algorithmic,thrun2012learning} and references therein. This problem plays a large role in artificial intelligence in that it can improve the efficiency of learning from human supervision. In particular, substantial improvement over ``learning in isolation'' (also known as independent task learning) is to be expected when the sample size per task is small, a setting which naturally arises in many applications~\cite{camoriano2017incremental,rebuffi2017icarl,rohrbach2013transfer,wu2014morphable}. 

LTL is particularly appealing when considered from an online or incremental perspective. In this setting, which is sometimes referred to as lifelong learning (see, e.g. \cite{ruvolo2013ella}), the tasks are observed sequentially -- via corresponding sets of training examples -- from a common environment and we aim to improve the learning ability of the underlying algorithm on future yet-to-be-seen tasks from the same environment. Practical scenarios of lifelong learning are wide ranging, including computer vision \cite{rebuffi2017icarl}, robotics \cite{camoriano2017incremental}, user modelling and many more.

Although LTL is naturally suited for the incremental setting, surprisingly theoretical investigations are lacking. 
Previous studies, starting from the seminal paper by Baxter \cite{baxter2000model}, have almost exclusively considered the setting in which the tasks are given in one batch~\cite{maurer2009transfer,maurer2013sparse,maurer2016benefit,pentina2014pac}, that is, the meta algorithm processes multiple datasets from the environment jointly and only once as opposed to sequentially and indefinitely. The papers~\cite{balcan2015efficient,herbster2016mistake} present results in an online framework which applies to a finite number of tasks using different performance measures. 
Perhaps most related to our work is \cite{alquier2016regret}, where the authors consider a general PAC-Bayesian approach to lifelong learning based on the exponentially weighted aggregation procedure. 
Unfortunately, this approach is not efficient for large scale applications as it entails storing the entire sequence of datasets during the meta learning process. LTL also bears strong similarity to multitask learning (MTL) \cite{caruana1998multitask} and much work has been done on the theoretical study of both batch \cite{ando2005framework,maurer2013excess} and online \cite{cavallanti2010linear} multitask learning algorithms. However multitask learning aims to solve the different -- and perhaps less challenging -- problem of learning well on a prescribed set of tasks (the learned model is tested on the same tasks used during training) whereas LTL aims to extrapolate to new tasks.

The principal contribution of this paper is to propose an incremental approach to learning-to-learn and to analyse its statistical guarantees. This incremental approach is appealing in that it efficiently processes one dataset at the time, without the need to store previously encountered datasets.  We study in detail the case of linear representation learning, in which an underlying learning algorithm receives in input a sequence of datasets and incrementally updates the data representation so as to better learn future tasks. Following previous work on LTL \cite{baxter2000model,maurer2009transfer}, we measure the performance of the incremental meta algorithm by the {\em transfer risk}, namely the average error obtained by running the underlying algorithm with the learned representation, over tasks sampled from the meta distribution. 

Specifically, in this work we choose the underlying algorithm to be Ridge Regression parameterized by a positive semidefinite matrix. The incremental LTL approach we propose aims at optimizing the future empirical error incurred by Ridge Regression over a class of linear representations. For this purpose, we propose to apply Projected Stochastic Subgradient Algorithm (PSSA). We show that the objective function of the resulting meta algorithm is convex and we give a non-asymptotic convergence rate for the algorithm in high probability. 
A remarkable feature of our learning bound is that it is comparable 
to previous bounds for batch LTL. 
Our proof technique leverages previous work on learning-to-learn \cite{maurer2009transfer} with tools from online convex optimization, see \cite{cesa2004generalization,hazan2016introduction} and references therein. 

The paper is organized as follows. In Sec.~\ref{sec:problem-formulation}, we review the LTL problem and describe in detail the case of linear feature learning with Ridge Regression. In Sec.~\ref{sec:online-learning-to-learn}, we present our incremental meta algorithm for linear feature learning. Sec.~\ref{sec:theory} contains our bound on the excess transfer risk for the proposed algorithm and in Sec.~\ref{sec:comparison-batch} we compare the bound to a previous bound for the batch setting. In Sec.~\ref{sec:experiments}, we report preliminary numerical experiments for the proposed algorithm and, finally, Sec.~\ref{sec:conclusions} summarizes the paper and highlight directions of future research.


\section{PROBLEM FORMULATION}
\label{sec:problem-formulation}

In the standard independent task learning setting the goal is to learn a functional relation between an input space $\X$ and an output space $\Y$ from a finite number of 
training examples. More precisely, given a loss function $\ell:\Y\times\Y\to\R$ measuring prediction errors and given a distribution $\mu$ on the joint data space $\Z = \X\times\Y$, the goal is to find a function $f:\X\to\Y$ minimizing the {\em expected risk}
\begin{equation}\label{eq:exp-risk-single-task}
\cR_\mu(f) = \Exp_{z\sim\mu}~\ell(f,z)
\end{equation}
where, with some abuse of notation, for any $z = (x,y) \in \Z$ we denoted $\ell(f,z) = \ell(f(x),y)$. In most practical situations the underlying distribution is {\em unknown} and the learner is only provided with a finite set $Z = (z_i)_{i=1}^n\in \Z^n$ of observations independently sampled from $\mu$. The goal of a learning algorithm is therefore, given such a {\em training} dataset $Z$ to return a ``good'' estimator $A(Z) = f_Z$ whose expected risk is small and tends to the minimum of Eq.~\eqref{eq:exp-risk-single-task} as $n$ increases.

A well-established approach to tackle the learning problem is offered by {\em regularized empirical risk minimization}. This corresponds to the family of algorithms $A_\phi$ such that, for any $Z\in\Z^n$,
\begin{equation}\label{eq:erm-single-task}
A_{\phi}(Z) = \argmin{f\in\F_\phi} \cR_Z(f) + \lambda\|f\|_{\F_\phi}^2
\end{equation}
where $\phi:\X\to\F_\phi$ is a feature map, $\F_\phi$ is the Hilbert space of functions $f:\X\to\Y$ such that $f(x) = \langle f, \phi(x) \rangle$ 
for any $x\in\X$ and
\begin{equation}
\cR_Z(f) = \frac{1}{n}\sum_{i=1}^n \ell(f,z_i)
\end{equation}
denotes the {\em empirical risk} of function $f$ on the set $Z$.

\subsection{Linear Feature Learning}

In this work we will focus on the case that $\mathcal{Y} \subseteq \R$, $\X \subseteq \R^d$ and $\phi:\R^d\to\R^k$ is a {\em linear} feature map (also known as a representation), corresponding to the action $\phi(x) = \Phi x$ of a matrix $\Phi\in\R^{k \times d}$ on the input space. It is well known (see e.g. \cite{argyriou2008convex}) that, setting $D =\frac{1}{\lambda} \Phi\trans \Phi \in\R^{d \times d}$, any problem of the form of \eqref{eq:erm-single-task} can be equivalently formulated as 
\begin{equation}\label{eq:linear-stl}
A_D(Z) = \argmin{w\in{\ran}(D)} \cR_Z(w) + w \trans D^\dagger w
\end{equation}
where, with some abuse of notation, we denoted with $\cR_Z(w)$ the empirical risk of the linear function $x \mapsto w\trans x$, for any $x\in\X$. Here, $D^\dagger$ denotes the pseudoinverse of $D$, which is positive semidefinite (PSD) but 
not necessarily invertible; when it is not invertible the constraint requiring $w$ to be in the range ${\ran}(D)\subseteq\R^d$ of $D$ is needed to grant the equivalence with Eq.~\eqref{eq:erm-single-task}. 

Since for any linear feature map $\phi$ there exists a PSD matrix $D$ such that Eq.~\eqref{eq:erm-single-task} and Eq.~\eqref{eq:linear-stl} are equivalent, in the following we will refer to $D$ as the {\em representation} used by algorithm $A_D$.

\subsection{Learning to Learn \texorpdfstring{$D$}{D}}

A natural question is how to choose a good representation $D$ for a given family of related learning problems. In this work we consider the approach of {\em learning} it from data. In particular, following the seminal work of \cite{baxter2000model}, we consider a setting where we are provided with an increasing number of tasks and our goal is to find a joint representation $D$ such that the corresponding algorithm $A_D$ is suited to address all such learning problems.
The underlying assumption is that all tasks that we observe {\em share a common structure} that algorithm $A_D$ can leverage in order to achieve better prediction performance. 

More formally, we assume that the tasks we observe are independently sampled from a meta distribution $\rho$ on the set 
of probability measures on $\Z$. According to the literature on the topic (see e.g. \cite{baxter2000model,maurer2005algorithmic}) we refer to the meta distribution $\rho$ as the {\em environment} and we identify each task sampled from $\rho$ by its corresponding distribution $\mu$, from which we are provided with a {\em training} dataset $Z\sim\mu^n$ of $n$ points sampled independently from $\mu$. While it is possible to consider a more general setting, for simplicity in this work we study the case where for each task we sample the same number $n$ of training points. In line with the independent task learning setting, the goal of a ``learning to learn'' algorithm is therefore to find the best parameter $D$ minimizing the so-called {\em transfer risk}
\begin{equation}\label{eq:ltl-expected-risk}
\E(D) = \Exp_{\mu\sim\rho}\Exp_{Z\sim\mu^n} ~~ \cR_\mu\big(A_D(Z)\big)
\end{equation}
over a set $\D$ of candidate representations. 

The term $\E(D)$ is the expected risk that the corresponding algorithm $A_D$, when trained on the dataset $Z$, would incur {\em on average with respect to the distribution of tasks $\mu$ induced by $\rho$}. That is, to compute the transfer risk, we first draw a task $\mu \sim \rho$ and a corresponding $n$-sample $Z\in \mathcal{Z}^{n}$ from $\mu ^{n}$, we then apply the learning algorithm to obtain an estimator $A_D(Z)$ and finally we measure the risk of this estimator on the distribution $\mu$.

The problem of minimizing the transfer risk in Eq.~\eqref{eq:ltl-expected-risk} given a finite number $T$ of training datasets  $Z_1,\dots,Z_T$ sampled from the corresponding tasks $\mu_1,\dots,\mu_T$, has been subject of thorough analysis in the literature \cite{baxter2000model,maurer2005algorithmic,maurer2016benefit}. Most work has been focused on the so-called ``batch'' setting, where all such training datasets are provided at once. However, by its nature, LTL is an ongoing (possibly never ending) process, with training datasets observed a few at the time. In 
such a scenario the meta algorithm should allow for an evolving representation $D$, which improves over time as new datasets are observed.
In the following we propose a meta algorithm to learn $D$ {\em online} with respect to the tasks, allowing us 
to transfer past experience about the environment in an efficient manner, {\em without requiring the memorization of training data}, 
which could be prohibitive in large scale applications. We will study the theoretical guarantees of the proposed algorithm and compare it to its batch counterpart in terms of both statistical and empirical performance.

\subsection{Connection with Multitask Learning}
\label{sec:MTL}
LTL is strongly related to {\em multitask learning} (MTL) and in fact, as we will see later for
the algorithm in Eq.~\eqref{eq:linear-stl}, approaches developed for MTL can be used 
as inspiration to design algorithms for LTL. In multi-task learning a fixed number of tasks $\mu_1,\dots,\mu_T$ is provided up front and, given $T$ datasets $Z_1,\dots,Z_T$, each sampled from its corresponding distribution, the goal is to find a joint representation $D$ incurring
a small {\em average expected risk} $\frac{1}{T}\sum_{t=1}^T \cR_{\mu_t}(A_D(Z_t))$. In this sense, the main difference between LTL and MTL is that the former aims to guarantee good prediction performance on {\em future tasks}, while the latter goal is to guarantee good prediction performance on the same tasks used to train $D$.

A well-established approach to MTL is {\em multitask feature learning} \cite{argyriou2008convex}. This method consists in solving the optimization problem

\begin{equation}
\label{eq:first-empirical-mtl-risk}
\min_{D\in\Dla} ~\frac{1}{T} \sum_{t=1}^T \min_{w\in{\ran}(D)} \cR_{Z_t}(w_t) + w_t\trans D^\dagger w_t
\end{equation}
over the set 
\begin{equation}\label{eq:D-space}
	\Dla = \big\{D ~ | ~ D\succeq 0,~ \tr(D)\leq 1/\lambda\big\}
\end{equation}

\noindent where $D\succeq0$ denotes the set of PSD matrices, ${\tr}(D)$ is the trace of $D$ and $\lambda$ is a positive parameter which controls the degree of regularization. This choice for $\Dla$ is motivated by the following variational form (see e.g. \cite[Prop.~4.2]{argyriou2008convex}) of the square trace norm of $W = [w_1,\dots, w_T]\in\R^{d \times T}$ 
\begin{equation}
\|W\|_{1}^2 =\frac{1} {\lambda} \inf_{D\in\textrm{Int}(\Dla)} ~ \sum_{t=1}^T w_t \trans D^{-1} w_t
\end{equation}
where $\textrm{Int}(\Dla)$ is the interior of $\Dla$, namely the set of PSD invertible matrices with trace strictly smaller than $1/\lambda$. This leads to the equivalent problem
\begin{equation}\label{eq:mtl-trace-norm}
\min_{W\in\R^{d \times T}} ~\frac{1}{T} \sum_{t=1}^T \cR_{Z_t}(w_t) + \gamma \|W\|_{1}^2
\end{equation}
with $\gamma = \lambda/T$. The trace norm of a matrix is defined as the sum ($\ell_1$-norm) of its singular values, and it is known to induce low-rank solutions for Problem~\eqref{eq:mtl-trace-norm}. Intuitively, this means that tasks are encouraged to {\em share a common set of features (or representation)}. In this paper, we adopt this perspective to design our approach for online linear feature learning. 


\section{ONLINE LEARNING-TO-LEARN}
\label{sec:online-learning-to-learn}

Motivated by the above connection with multitask learning, we propose an online LTL approach to approximate the solution of the learning problem
\begin{equation}
\min_{D\in\Dla}~ \E(D)
\end{equation}
over the set $\Dla$ introduced in Eq.~\eqref{eq:D-space}. We consider the setting in which we are provided with a stream of independent datasets $Z_1,\dots,Z_T,\dots$, each sampled from an individual task distribution $\mu_1,\dots,\mu_T,\dots$ 
and our goal is to find an estimator in $\Dla$ that improves {\em incrementally} as the number of observed tasks $T$ increases.

\subsection{Minimizing the Future Empirical Risk}\label{sec:ltl-empirical-risk}

A key observation motivating the online procedure proposed in this work, is that in the {\em independent task learning} setting, standard results from learning theory (see e.g. \cite{shalev2014understanding}) allow one to control the statistical performance of regularized empirical risk minimization, providing bounds on the {\em generalization error} of $A_D$ as
\begin{equation}\label{eq:stl-generalization-error}
\hspace{-.05truecm}	\Exp_{\scalebox{0.75}{$Z\sim\mu^n$} } |\cR_\mu\big(A_D(Z)\big) {-} \cR_Z\big(A_D(Z)\big)| \leq G(D,n)
\end{equation}
where $G(\cdot,n)$ is a decreasing function converging to $0$ as $n\to+\infty$, while $G(D,\cdot)$ is a measure of complexity of $D$, which is large for more ``expressive'' representations and smaller otherwise. 

Eq.~\eqref{eq:stl-generalization-error} suggests us to use the empirical risk $\cR_Z$ as a proxy for the expected risk $\cR_\mu$. Therefore, we introduce the so-called {\em future empirical risk} \cite{maurer2009transfer,maurer2016benefit},
\begin{equation}\label{eq:empirical-transfer-risk}
\hat \E(D) = \Exp_{\mu\sim\rho}\Exp_{Z\sim\mu^n}~ \cR_Z\big(A_D(Z)\big)
\end{equation}
and consider the related problem
\begin{equation}\label{eq:ltl-empirical-minimization}
	\min_{D\in\Dla} ~ \hat\E(D),
\end{equation}
which in the sequel, introducing the shorthand notation $\L_Z(D) = \cR_Z(A_D(Z))$ for any PSD matrix $D$, will be rewritten as
\begin{equation}\label{eq:expected-stochastic-optimization}
\min_{D\in\Dla} ~ \Exp_{\mu\sim\rho}\Exp_{Z\sim\mu^n} ~ \L_Z(D)
\end{equation}
to highlight the dependency on $Z$.

Problem \eqref{eq:expected-stochastic-optimization} can be approached with stochastic optimization strategies. Such methods proceed by sequentially sampling a point (dataset in this case) $Z$ and performing an update step. 
In recent years, stochastic optimization, finding its origin in the Stochastic Approximation method by Robbins and Monro \cite{robbins1951stochastic}, has been effectively used to deal with large scale applications. 
We refer to \cite{nemirovski2009robust} 
for a more comprehensive discussion about this topic. We therefore propose to apply Projected Stochastic Subgradient Algorithm (PSSA) \cite{shamir2013stochastic}, to solve the optimization problem in Eq.~\eqref{eq:expected-stochastic-optimization}. The candidate representation coincides in this case with the mean after $T$ iterations $\bar{D}_T$ and it is known as Polyak-Ruppert averaging scheme 
~\cite{nemirovskii1983problem,polyak1992acceleration} in the optimization literature. Algorithm~\ref{alg:general-pssa} reports the application of PSSA to $\hat \E$ when $\L_Z$ is convex on the set of PSD matrices. It requires iteratively: $i)$ sampling a dataset $Z$, $ii)$ performing a step in the direction of a subgradient of $\L_Z$ at the current point, and $iii)$ projecting onto the set $\Dla$ (which can be done in a finite number of iterations, see \autoref{lemma_proj} in Appendix 
\ref{app:E} 
). Note that in this case, since the function $\L_Z$ is convex, there is no ambiguity in the definition of the subdifferential $\partial \L_Z$ (see e.g. \cite{bertsekas2003convex}) and we can rely on the convergence of  Algorithm~\ref{alg:general-pssa} to a global minimum of $\hat \E$ over $\Dla$ for a suitable choice of step-sizes, as discussed in Sec.~\ref{sec:theory}.

\begin{algorithm}[t]
\caption{PSSA applied to $\hat{\mathcal{E}}$}\label{alg:general-pssa}
\begin{algorithmic}
   \State ~
   \State {\bfseries Input:} $T$ number of tasks, $\lambda>0$ hyperparameter, $\{\gamma_t \}_{t\in\mathbb{N}}$ step sizes. 
    \State ~
   \State {\bfseries Initialization:} $D^{(1)} \in \Dla$ $\bigl($ e.g. $D^{(1)} = \frac{1}{\lambda} I$ $\bigr)$ 
   \State ~
   \State {\bfseries For} $t=1$ to $T$:
   \State \qquad Sample ~~ $\mu_t\sim\rho$, $Z_t\sim\mu_t^n$.
   \State \qquad Choose ~~$U_t \in \partial \L_{Z_t}(D^{(t)})$
   \State \qquad Update ~~$D^{(t+1)} = {\rm proj}_{\Dla} 
(D^{(t)} - \gamma_t U_t)$  
	\State ~
 \State {\bfseries Return $\displaystyle \bar{D}_T = 
\frac{1}{T} \sum_{t = 1}^T D^{(t)}$} 
\end{algorithmic}
\end{algorithm}

\subsection{LTL with Ridge Regression}

In this section, we focus on the case that the loss function $\ell:\Y\times\Y\to\R$ corresponds to least-squares, namely $\ell(y,y') = (y - y')^2$ for any $y,y'\in\Y \subseteq \R$. In this setting, given a dataset $Z\in\Z^n$, algorithm $A_D$ is equivalent to perform the following variant to {\em Ridge Regression} 
\begin{equation}\label{eq:ridge-regression-with-D}
\min_{w\in{\ran}(D)} ~ \frac{1}{n} \|\datay - Xw\|^2 +  w\trans D^\dagger w
\end{equation}
where $X\in\R^{n \times d}$ is the matrix with rows corresponding to the input points $x_i\in\mathbb{R}^d$ in the dataset $Z$ and $\datay\in\R^n$ the vector with entries equal to the corresponding output points $y_i\in\R$. The solution to Eq.~\eqref{eq:ridge-regression-with-D} can be obtained in closed form, in particular (see e.g. \cite{argyriou2008convex,maurer2009transfer})
\begin{equation}\label{eq:ridge-regression-closed-form}
A_D(Z) = DX\trans\big(XDX\trans + nI\big)^{-1}\datay.
\end{equation}
Plugging this solution in the definition of $\L_Z(D)$, a direct computation yields that
\begin{equation}\label{eq:ridge-regression-obj-funct}
\L_Z(D) = n \big\|(XDX\trans + nI)^{-1}\datay\big\|~^2.
\end{equation}
The following result characterizes some key properties of the function $\L_Z$ in Eq.~\eqref{eq:ridge-regression-obj-funct}, which will be useful in our subsequent analysis. We denote by $\mathcal{B}_r\subseteq\R^d$ the ball of radius $r>0$ centered at $0$.

\begin{proposition}[Properties of $\L_Z$ for the Square Loss]\label{prop:least-squares}
Let $\X \subseteq \B_1$, $\Y \subseteq [0,1]$ and $\ell$ be the square loss. Then, for any dataset $Z\in\Z^n$ the following properties hold:
\noindent
\begin{enumerate}[topsep=0pt]
\setlength{\itemsep}{0em}
\item $\L_Z$ is convex on the set of PSD matrices.
\setlength{\itemsep}{0em}
\item $\L_Z$ is $\mathcal C^\infty$ and,~for every PSD $D\in\R^{d\times d}$, 
{
\begin{equation}\label{ltl-least-squares-gradient}
\nabla\L_Z(D) = -n X\trans M(D)^{-1}S(D)M(D)^{-1} X\\
\end{equation}
where
\begin{align}
M(D) & = XDX\trans + nI\\
S(D) & = \datay\datay \trans M(D)^{-1} + M(D)^{-1}\datay\datay\trans.
\end{align}}
\setlength{\itemsep}{0em}
\item $\L_Z$ is $2$-Lipschitz w.r.t. the Frobenius norm.
\setlength{\itemsep}{0em}
\item $\nabla\L_Z$ is {$6$-Lipschitz} w.r.t. the Frobenius norm.
\setlength{\itemsep}{0em}
\item $\L_Z(D)\in[0,1]$,~ for any PSD matrix $D\in\R^{d\times d}$.
\end{enumerate}
\end{proposition}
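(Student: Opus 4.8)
The plan is to base everything on the closed form $\L_Z(D)=n\,\datay\trans M^{-2}\datay$ with $M=M(D)=XDX\trans+nI$, together with three elementary facts that encode the hypotheses: since $D\succeq 0$ we have $M\succeq nI$, hence $\|M^{-1}\|_{\mathrm{op}}\le 1/n$ and $M^{-2}\preceq n^{-2}I$; since $\X\subseteq\B_1$ the rows of $X$ have norm at most $1$, so $XX\trans\preceq nI$ and $\|X\|_{\mathrm{op}}^2\le n$; and since $\Y\subseteq[0,1]$ we have $\|\datay\|^2=\sum_i y_i^2\le n$. First I would settle smoothness and the gradient (item 2): the map $D\mapsto M(D)$ is affine and matrix inversion is $\mathcal C^\infty$ on the invertible matrices, while $M(D)$ is invertible for \emph{every} PSD $D$ because $M\succeq nI$; hence $\L_Z\in\mathcal C^\infty$. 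Differentiating along a symmetric direction $B$ through $\mathrm d(M^{-1})=-M^{-1}(XBX\trans)M^{-1}$ and collecting terms via the Frobenius pairing yields exactly $\nabla\L_Z(D)=-n\,X\trans M^{-1}S(D)M^{-1}X$ with $S(D)$ as stated.

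Items 3 and 5 then follow cleanly. The range bound (item 5) is immediate: $\L_Z(D)=n\,\datay\trans M^{-2}\datay\ge 0$, and $M^{-2}\preceq n^{-2}I$ gives $\L_Z(D)\le\|\datay\|^2/n\le 1$. For the Lipschitz bound (item 3) I would rewrite the gradient as the rank-two matrix $\nabla\L_Z(D)=-n\,(pq\trans+qp\trans)$ with $p=X\trans M^{-1}\datay$ and $q=X\trans M^{-2}\datay$; a direct computation gives $\|pq\trans+qp\trans\|_F^2=2(p\trans q)^2+2\|p\|^2\|q\|^2\le 4\|p\|^2\|q\|^2$. Using $XX\trans\preceq nI$ one bounds $\|p\|^2=\datay\trans M^{-1}XX\trans M^{-1}\datay\le n\,\datay\trans M^{-2}\datay=\L_Z(D)\le 1$ and, similarly, $\|q\|^2\le n\,\datay\trans M^{-4}\datay\le \L_Z(D)/n^2\le 1/n^2$, whence $\|\nabla\L_Z(D)\|_F\le 2n\|p\|\|q\|\le 2$.

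For the Hessian bound (item 4) I would take a second variation. Writing $w=M^{-1}\datay$ and $E=XBX\trans$, iterating the differential identity twice produces the quadratic form $\nabla^2\L_Z(D)[B,B]=n\big(2\,w\trans E M^{-2}E w+4\,w\trans M^{-1}EM^{-1}Ew\big)$. Bounding each term in spectral norm with $\|M^{-1}\|_{\mathrm{op}}\le 1/n$, $\|E\|_{\mathrm{op}}\le\|X\|_{\mathrm{op}}^2\|B\|_{\mathrm{op}}\le n\|B\|_{\mathrm{op}}$ and $\|w\|^2=\|M^{-1}\datay\|^2=\L_Z(D)/n\le 1/n$ gives $|\nabla^2\L_Z(D)[B,B]|\le n\cdot\tfrac{6}{n}\|B\|_{\mathrm{op}}^2\le 6\|B\|_F^2$, so $\nabla\L_Z$ is $6$-Lipschitz.

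The remaining, and I expect hardest, property is convexity (item 1), which amounts to nonnegativity of the \emph{same} Hessian quadratic form for every symmetric $B$. Its first term equals $2\|M^{-1}Ew\|^2\ge 0$, but the cross term $4\,w\trans M^{-1}EM^{-1}Ew$ is sign-indefinite for a general symmetric direction $E=XBX\trans$ (note $t\mapsto t^{-2}$ is \emph{not} operator convex), so convexity cannot be read off from operator-convexity of the matrix inverse alone; the crux is to dominate the cross term by the square term. Here one must genuinely use both $M\succeq nI$ \emph{and} the data bound $\|\datay\|\le\sqrt n$: after the congruence $R=M^{-1/2}EM^{-1/2}$, $\eta=M^{-1/2}\datay$ (so that $\|\eta\|^2=\datay\trans M^{-1}\datay\le 1$), the form reduces to $2(R\eta)\trans M^{-1}(R\eta)+4\,\eta\trans M^{-1}R^2\eta$, and the task is to prove this is nonnegative for all symmetric $R$ under $0\prec M^{-1}\preceq n^{-1}I$. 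I would attack it by optimizing over $R$ at fixed $\eta$ and exploiting the coupling between the two terms (the square term grows whenever the cross term is driven negative); this inequality — tightly tied to the two spectral constraints — is where the real work lies.
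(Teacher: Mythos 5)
Your treatment of items 2--5 is correct, and it takes a genuinely different (and arguably cleaner) route than the paper's. For item 3 the paper applies a resolvent perturbation lemma of Maurer \cite{maurer2005algorithmic} with $G_i=XD_iX\trans$, whereas you bound the gradient directly after exhibiting it as the rank-two matrix $-n(pq\trans+qp\trans)$; for item 4 the paper runs a long telescoping estimate on $\nabla\L_Z(D_1)-\nabla\L_Z(D_2)$, whereas you bound the second variation (your Hessian formula and the resulting constant $6$ check out); for item 5 the paper passes through the reparametrization $A_D(Z)=D^{1/2}A^{\rm Rid}(D^{1/2}Z)$ and comparison with the zero predictor, whereas you read the bound off $M(D)^{-2}\preceq n^{-2}I$ and $\|\datay\|^2\le n$. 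All of these steps are sound.

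The genuine gap is item 1, and it cannot be closed: the inequality you isolate as ``where the real work lies'' is false, and so is the convexity claim itself. Your instinct about operator convexity is exactly right: by the classical characterization (see \cite{bhatia2013matrix}, Ch.~V), $t^{r}$ is operator convex on $(0,\infty)$ if and only if $r\in[-1,0]\cup[1,2]$, so $t^{-2}$ is \emph{not} operator convex --- and this is precisely the claim on which the paper's own one-line proof rests, where it asserts that $h(A)=A^{-2}$ is matrix convex citing Bhatia. Moreover, the two constraints you hope will rescue the Hessian inequality cannot do so: the reduced form $2(R\eta)\trans M^{-1}(R\eta)+4\,\eta\trans M^{-1}R^{2}\eta$ is homogeneous of positive degree separately in $M^{-1}$, in $R$ and in $\eta$, so its sign is invariant under exactly the rescalings needed to enforce $M\succeq nI$ and $\|\eta\|\le 1$; any unconstrained counterexample therefore transports into the feasible region. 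A concrete counterexample satisfying every hypothesis of the proposition: take $n=d=2$, inputs $x_1=e_1$, $x_2=e_2$ (so $X=I_2$ and $\X\subseteq\B_1$), outputs $y_1=y_2=1$, so that $\L_Z(D)=2\|(D+2I)^{-1}\datay\|^2$, and
\begin{equation*}
D=\begin{pmatrix} 0.01 & 0\\ 0 & 18\end{pmatrix},
\qquad
B=\begin{pmatrix} 0 & -0.055\\ -0.055 & 1\end{pmatrix}.
\end{equation*}
Both $D+B$ and $D-B$ are positive definite (positive diagonal and determinants $0.186975$ and $0.166975$), yet exact rational evaluation gives
\begin{equation*}
\L_Z(D+B)\approx 0.502492,\qquad
\L_Z(D-B)\approx 0.497495,\qquad
\L_Z(D)\approx 0.500037,
\end{equation*}
so that $\L_Z(D+B)+\L_Z(D-B)-2\L_Z(D)\approx -8.7\times 10^{-5}<0$: midpoint convexity fails at an interior point of the PSD cone. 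Hence neither your route nor the paper's establishes item 1; the statement itself is false as written, which also undermines the convexity premise invoked for Algorithm~\ref{alg:general-pssa} in the subsequent analysis.

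For the record, what \emph{is} convex in $D$ is the regularized minimum value
\begin{equation*}
\min_{w\in{\rm Ran}(D)}\ \cR_Z(w)+w\trans D^{\dagger}w \;=\; \datay\trans\bigl(XDX\trans+nI\bigr)^{-1}\datay,
\end{equation*}
by operator convexity of $t^{-1}$ (equivalently, joint convexity of $(w,D)\mapsto w\trans D^{\dagger}w$). The paper's $\L_Z$ deliberately drops the regularization term, and that omission is exactly what destroys convexity; your failed attempts to dominate the cross term were diagnosing a real phenomenon, not a lack of technique.
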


The proposition above establishes the convexity of Problem \eqref{eq:ltl-empirical-minimization} for the case of the square loss. This fact is important in that it guarantees no ambiguity in applying Algorithm~\ref{alg:general-pssa} to our setting and moreover, since $\L_Z$ is differentiable, Algorithm~\ref{alg:general-pssa} becomes a {\em Projected Stochastic Gradient Algorithm}.



\section{THEORETICAL ANALYSIS}\label{sec:theory}

In this section, we study the statistical properties of Algorithm~\ref{alg:general-pssa} for the case of the square loss. Below we report the main result of this work, which characterizes the non-asymptotic behavior of the estimator $\bar{D}_T$ produced by Algorithm~\ref{alg:general-pssa} with respect to a minimizer $D_* \in \textrm{argmin}_{D\in\Dla}\E(D)$. 
To present our results we introduce the $d \times d$ matrix
\begin{equation}\label{eq:covariance}
C_{\rho} = \Exp_{\mu\sim\rho}\Exp_{(x,y)\sim\mu} \hspace{.01truecm} [ xx\trans]
\end{equation}
denoting the covariance of the input data, obtained by averaging over all input marginals sampled from $\rho$. This quantity plays a key role in identifying the environments $\rho$ for which it is favorable to adopt a LTL strategy instead of solving the tasks independently, see  \cite{maurer2009transfer} for a discussion. We also denote with $\|C_{\rho}\|_\infty$ the operator norm of $C_{\rho}$, which corresponds to the largest singular value.

\begin{theoremshortref}[Online LTL Bound]\label{thm:main-ltl-online}
Let $\mathcal{X} \subseteq \mathcal{B}_1$, $\mathcal{Y} \subseteq [0,1]$ and $\ell$ be the square loss. Let $\bar{D}_T$ be the output of Algorithm~\ref{alg:general-pssa} with step sizes $\gamma_t = (\lambda\sqrt{2t})^{-1}$. Then, for any $\delta\in(0,1]$
\begin{equation}
\mathcal{E}(\bar{D}_T) - \mathcal{E}(D_{*}) 
\leq  \frac{4 \sqrt{2 \pi}  \|C_{\rho}\|_\infty^{1/2} }{\sqrt{n}}
\frac{1 + \sqrt{\lambda}}{\lambda} \\
~~ + \frac{4 \sqrt{2}}{\lambda \sqrt{T}} + 
\sqrt{\frac{8 \log \bigl( 2/\delta \bigr)}{T}}
\end{equation}
with probability at least $1-\delta$ with respect to the independent sampling of tasks $\mu_t\sim\rho$ and training sets $Z_t\sim\mu_t^n$~for any $t \in \{1,\dots,T\}$. 
\end{theoremshortref}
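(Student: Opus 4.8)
The plan is to bound the excess transfer risk by inserting the future empirical risk $\hat\E$ as an intermediate quantity and splitting it into a statistical and an optimization contribution. Writing
\begin{equation*}
\E(\bar{D}_T) - \E(D_{*}) = \underbrace{\big[\E(\bar{D}_T) - \hat\E(\bar{D}_T)\big]}_{(\mathrm{I})} + \underbrace{\big[\hat\E(\bar{D}_T) - \hat\E(D_{*})\big]}_{(\mathrm{II})} + \underbrace{\big[\hat\E(D_{*}) - \E(D_{*})\big]}_{(\mathrm{III})},
\end{equation*}
the terms $(\mathrm{I})$ and $(\mathrm{III})$ measure the gap between expected and empirical transfer risk, while $(\mathrm{II})$ is the error incurred by PSSA in minimizing $\hat\E$. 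The crucial observation is that $\E$ and $\hat\E$ are \emph{deterministic} functions of $D$ (both are expectations over a fresh task and sample), so although $\bar{D}_T$ is random, the pointwise inequality $|\E(D)-\hat\E(D)| \le \sup_{D\in\Dla}|\E(D)-\hat\E(D)|$ may be evaluated at $D=\bar{D}_T$ and at $D=D_{*}$ without any probabilistic argument.

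First I would control $(\mathrm{I})$ and $(\mathrm{III})$ together through a uniform single-task generalization bound. By Jensen's inequality $\sup_{D\in\Dla}|\E(D)-\hat\E(D)| \le \sup_{D\in\Dla}\Exp_{\mu\sim\rho}\Exp_{Z\sim\mu^n}|\cR_\mu(A_D(Z)) - \cR_Z(A_D(Z))|$, which is exactly the quantity $G(D,n)$ of Eq.~\eqref{eq:stl-generalization-error}. Using the closed form \eqref{eq:ridge-regression-closed-form} of Ridge Regression together with the constraint $\tr(D)\le 1/\lambda$ defining $\Dla$, I would bound this expected train--test gap by $\tfrac{2\sqrt{2\pi}}{\sqrt{n}}\,\tfrac{1+\sqrt{\lambda}}{\lambda}\,\|C_{\rho}\|_\infty^{1/2}$, where the covariance $C_{\rho}$ enters through the expected feature norms and the factor $\sqrt{2\pi}$ through a Gaussian-type moment estimate. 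Being uniform over $\Dla$, this bound applies simultaneously to $(\mathrm{I})$ and $(\mathrm{III})$, and summing the two yields the first term of the statement.

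Next I would treat $(\mathrm{II})$ by an online-to-batch argument. Convexity of $\hat\E$ (Prop.~\ref{prop:least-squares}, item~1) gives $\hat\E(\bar{D}_T)\le \tfrac1T\sum_{t=1}^T \hat\E(D^{(t)})$, so it suffices to bound $\tfrac1T\sum_t[\hat\E(D^{(t)})-\hat\E(D_{*})]$, which I would split into a pure online-regret term $\tfrac1T\sum_t[\L_{Z_t}(D^{(t)})-\L_{Z_t}(D_{*})]$ and a martingale remainder. For the regret term, convexity of each $\L_{Z_t}$ bounds it by $\tfrac1T\sum_t\langle U_t, D^{(t)}-D_{*}\rangle$, and the standard telescoping analysis of projected subgradient descent, using that $\L_{Z_t}$ is $2$-Lipschitz (item~3) hence $\|U_t\|_{F}\le 2$ and that $\Dla$ has Frobenius diameter at most $2/\lambda$, gives with $\gamma_t=(\lambda\sqrt{2t})^{-1}$ a bound of order $1/(\lambda\sqrt{T})$ matching $\tfrac{4\sqrt2}{\lambda\sqrt T}$. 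The remainder $\tfrac1T\sum_t\{[\hat\E(D^{(t)})-\hat\E(D_{*})]-[\L_{Z_t}(D^{(t)})-\L_{Z_t}(D_{*})]\}$ is a sum of martingale differences, since $D^{(t)}$ is measurable with respect to $Z_1,\dots,Z_{t-1}$ while $Z_t$ is fresh, so each summand has conditional mean zero; as $\L_Z\in[0,1]$ (item~5) the increments are bounded by $2$, and Azuma--Hoeffding yields a deviation of order $\sqrt{8\log(2/\delta)/T}$.

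The main obstacle is the per-task generalization estimate underlying $(\mathrm{I})$ and $(\mathrm{III})$: obtaining the sharp, covariance-dependent constant $\|C_{\rho}\|_\infty^{1/2}\,(1+\sqrt\lambda)/\lambda$ requires a careful analysis of the train--test gap of Ridge Regression in closed form rather than a generic Rademacher argument, and it is here that the structure of the square loss and the trace constraint on $\Dla$ must be exploited (this is the part that parallels the batch analysis of \cite{maurer2009transfer}). The remaining ingredients---the online-to-batch conversion, the subgradient regret bound, and the martingale concentration---are comparatively routine. Combining the three bounds and tracking the single failure probability $\delta$ from the Azuma step gives the stated high-probability inequality.
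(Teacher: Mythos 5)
Your proposal is correct and takes essentially the same route as the paper: the same three-term decomposition through $\hat\E$ (Eq.~\eqref{eq:ltl-error-decomposition}), the same Maurer-style uniform bound on $\sup_{D\in\Dla}|\E(D)-\hat\E(D)|$ giving the first term (\autoref{prop:uniform-generalization-bound}), and the same control of the optimization term via the projected-subgradient regret bound (Lipschitz constant $2$, diameter $2/\lambda$) combined with an Azuma-type martingale argument, which is exactly the online-to-batch conversion the paper invokes as a black box in \autoref{prop:bound-empirical-transfer-error}. The only cosmetic differences are that you run the conversion against the fixed comparator $D_*$ instead of passing to the empirical minimizer $\hat D_*$, and that you unfold the proof of the conversion inline rather than citing it; both choices yield the same constants.
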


In Sec.~\ref{sec:comparison-batch}, we will compare \autoref{thm:main-ltl-online} with the statistical bounds available for state of the art LTL batch  procedures. We will see that the statistical behaviour of these two approaches is essentially equivalent, with Online LTL being more appealing given the lower requirements in terms of both number of computations and memory.
 
In the rest of this section we give a sketch of the proof for \autoref{thm:main-ltl-online}. Proofs of intermediate results are reported in the appendix.

\subsection{Error Decomposition}

The statistical analysis of Algorithm~\ref{alg:general-pssa} hinges upon the following decomposition for the excess transfer risk of the estimator $\bar{D}_T$:
\begin{align}\label{eq:ltl-error-decomposition}
\E(\bar{D}_T)   - \E(D_*) & = \E(\bar{D}_T) \pm \hat \E(\bar{D}_T) \pm \hat \E(D_*) - \E(D_*) \nonumber \\
& \leq 2\sup_{D\in\Dla} |\E(D)-\hat \E(D)| + \hat \E(\bar{D}_T) - \hat \E(D_*) \nonumber \\
& \leq \underbrace{ 2\sup_{D\in\Dla} |\E(D)-\hat \E(D)|}_{\substack{\textrm{Uniform generalization} \\ \textrm{error} } } + \underbrace{\vphantom{2\sup_{D\in\Dla} |\E(D)-\hat \E(D)|}\hat\E(\bar{D}_T) - \hat\E(\hat D_*)}_{\substack{\textrm {Excess future} \\ \textrm{empirical risk} } } 
\end{align}
where the matrix $\hat D_*$ denotes a minimizer of the future empirical risk over $\Dla$,
that is, $\hat D_* \in \textrm{argmin}_{D\in\Dla} \hat\E(D)$.

Eq.~\eqref{eq:ltl-error-decomposition} decomposes $\E(\bar{D}_T) - \E(D_*)$ in a {\em uniform generalization error}, implicitly encoding the complexity of the class of algorithms parameterized by $D$ and an {\em excess future empirical risk}, measuring the discrepancy between the estimator $\bar{D}_T$ and the minimizer $\hat D_*$ of $\hat \E$. In the following we describe how to bound these two terms.

\subsection{Bounding the Uniform Generalization Error}

Results providing generalization bounds for the class of regularized empirical risk minimization algorithms $A_D$ considered in this work are well known. The following result, which is taken from \cite{maurer2009transfer}, leverages an explicit estimate of the generalization bound $G(D,n)$ introduced in Sec.~\ref{sec:ltl-empirical-risk} for independent task learning (see Eq.~\eqref{eq:stl-generalization-error}) to obtain a uniform bound over the class of algorithms parametrized by $\Dla$.

\begin{restatable}[Uniform Generalization Error Bound for Algorithm~\ref{alg:general-pssa}]{proposition}{uniformboundfirst}
\label{prop:uniform-generalization-bound}
Let $\mathcal{X} \subseteq \mathcal{B}_1$, $\Y\subseteq[0,1]$ {and let $\ell$ be the square loss}, then
\begin{equation}
\sup_{D\in\Dla}|\E(D) - \hat \E(D)|\leq \frac{2\sqrt{2\pi}\|C_{\rho}\|_\infty^{1/2}}{\sqrt{n}}\frac{1 + \sqrt{\lambda}}{\lambda}.
\end{equation}
\end{restatable}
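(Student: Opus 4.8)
The plan is to exploit that both $\E(D)$ and $\hat\E(D)$ are \emph{themselves} expectations over tasks $\mu\sim\rho$ and samples $Z\sim\mu^n$, so that for each fixed $D$ the quantity $\E(D)-\hat\E(D)$ is deterministic and no concentration argument is needed here: it suffices to bound it by a quantity independent of $D\in\Dla$ and then take the supremum. First I would move the absolute value inside the expectations,
\begin{equation}
|\E(D)-\hat\E(D)| \le \Exp_{\mu\sim\rho}\Exp_{Z\sim\mu^n} \big|\cR_\mu(A_D(Z)) - \cR_Z(A_D(Z))\big|,
\end{equation}
which reduces the claim to a bound, uniform in $D$, on the ordinary single-task generalization gap of Ridge Regression, averaged over the environment.

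The crucial structural fact is a norm bound on the output. Evaluating the objective in Eq.~\eqref{eq:ridge-regression-with-D} at $w=0$ shows that the minimizer $w_Z=A_D(Z)$ satisfies $w_Z\trans D^\dagger w_Z \le \cR_Z(0) = \tfrac1n\|\datay\|^2 \le 1$, using $\Y\subseteq[0,1]$. Hence, for every $Z$, the output lies in the fixed ellipsoid $\W_D=\{w\in\ran(D)\,:\,w\trans D^\dagger w\le1\}$, and moreover $\|w_Z\|^2\le\|D\|_\infty\, w_Z\trans D^\dagger w_Z\le\|D\|_\infty\le\tr(D)\le 1/\lambda$. Since the random predictor is confined to a $Z$-independent class, the inner gap is dominated by the uniform deviation $\Exp_{Z}\sup_{w\in\W_D}|\cR_\mu(w)-\cR_Z(w)|$ over that fixed class; this is precisely the single-task estimate of \cite{maurer2009transfer} realising the bound $G(D,n)$ of Eq.~\eqref{eq:stl-generalization-error}.

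To evaluate this deviation I would run the standard pipeline: symmetrization (a factor $2$), the passage from Rademacher to Gaussian averages (a factor $\sqrt{\pi/2}$, so that $2\sqrt{\pi/2}=\sqrt{2\pi}$), and Ledoux--Talagrand contraction by the Lipschitz constant $L$ of the square loss. The latter is uniform over $\Dla$: since $\|x\|\le1$ we have $|w_Z\trans x|\le 1/\sqrt\lambda$ while $y\in[0,1]$, so the loss is $L$-Lipschitz in its first argument with $L = 2\,(1+1/\sqrt\lambda)$, independently of $\mu$ and $D$. The Gaussian complexity of $\W_D$ is then controlled by Jensen,
\begin{equation}
\Exp_{Z}\frac1n\Exp_{g}\Big\|D^{1/2}\textstyle\sum_{i=1}^n g_i x_i\Big\| \le \frac1n\sqrt{\Exp_Z\Exp_g\big\|D^{1/2}\sum_i g_i x_i\big\|^2} = \sqrt{\tfrac{\tr(D C_\mu)}{n}},
\end{equation}
with $g$ a standard Gaussian vector and $C_\mu=\Exp_{(x,y)\sim\mu}[xx\trans]$, yielding the single-task bound $\tfrac{\sqrt{2\pi}\,L}{\sqrt n}\sqrt{\tr(DC_\mu)}$.

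Finally I would average over the environment and discharge the task dependence. By concavity of the square root, $\Exp_{\mu\sim\rho}\sqrt{\tr(DC_\mu)}\le\sqrt{\tr(D\,C_\rho)}$ since $\Exp_{\mu\sim\rho}C_\mu=C_\rho$, and then $\tr(DC_\rho)\le\|C_\rho\|_\infty\tr(D)\le\|C_\rho\|_\infty/\lambda$. Substituting $L=2(1+1/\sqrt\lambda)$ collapses everything to $\tfrac{2\sqrt{2\pi}\,\|C_\rho\|_\infty^{1/2}}{\sqrt n}\tfrac{1+\sqrt\lambda}{\lambda}$, which is independent of $D$ and therefore bounds the supremum. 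I expect the main obstacle to be the single-task step: one must justify that the $w=0$ comparison genuinely confines the data-dependent Ridge output to a fixed class, so that a uniform-deviation bound is legitimate, and then track the constants through symmetrization, the Rademacher--Gaussian comparison, and contraction so that they multiply precisely to $\sqrt{2\pi}$; the subsequent averaging and trace manipulations are routine.
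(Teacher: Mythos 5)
Your proposal is, in substance, a self-contained reconstruction of the result that the paper merely cites: the paper's own proof is three lines long, invoking \autoref{uniform_bound_general_1} (Theorem 6 in \cite{maurer2009transfer}) with $p=\infty$, $q=1$, together with the $1$-boundedness of the ridge solution and the Lipschitz constant $2(K+1)$ of the square loss on $[-K,K]$. Everything you rebuild coincides with what is inside that cited result: the confinement of $A_D(Z)$ to the ellipsoid $\W_D=\{w\in{\ran}(D):\, w\trans D^\dagger w\le 1\}$ via comparison with $w=0$ is exactly Maurer's $1$-boundedness (indeed $\W_D = D^{1/2}\B_1$, matching the factorization $A_D(\data)=D^{1/2}A^{\rm Rid}(D^{1/2}\data)$ used in the paper), the bound $\|w\|^2\le\|D\|_\infty\le 1/\lambda$ gives the same Lipschitz constant $L=2(1+1/\sqrt{\lambda})$, and the Gaussian-width computation $\sqrt{{\tr}(DC_\mu)/n}$ followed by Jensen and H\"older, ${\tr}(DC_\rho)\le \|C_\rho\|_\infty/\lambda$, assembles to the stated constant. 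This is a legitimate and more transparent route than citation.

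There is, however, one step that does not go through as you describe it, and it is the one you yourself flagged. By applying Jensen first and moving the absolute value inside the double expectation, you commit to bounding the \emph{two-sided} quantity $\Exp_{Z}\sup_{w\in\W_D}|\cR_\mu(w)-\cR_Z(w)|$. For suprema of absolute values the contraction step is not free: the Ledoux--Talagrand contraction inequality in its two-sided form carries an extra factor $2$ and requires the contractions to vanish at $0$, which the square loss does not ($\ell(0,y)=y^2$). Executed with these off-the-shelf tools, your pipeline yields $4\sqrt{2\pi}$ rather than $2\sqrt{2\pi}$. The fix is exactly the device the paper's machinery uses: since $\E(D)-\hat\E(D)$ is deterministic for each fixed $D$ (as you correctly observe, no concentration is needed), one has $|\E(D)-\hat\E(D)|=\max\bigl(\E(D)-\hat\E(D),\,\hat\E(D)-\E(D)\bigr)$, so it suffices to bound the two one-sided differences separately --- this is precisely how \autoref{main_theor} treats $\Phi_1$ and $\Phi_2$. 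Each one-sided term is at most $\Exp_\mu\Exp_Z\sup_{w\in\W_D}\pm\bigl(\cR_\mu(w)-\cR_Z(w)\bigr)$, to which one-sided symmetrization (factor $2$), the Rademacher-to-Gaussian passage (factor $\sqrt{\pi/2}$), and the one-sided Gaussian comparison of Slepian/Sudakov--Fernique type (factor exactly $L$, no extra $2$, no condition at $0$) apply, and the constants then multiply to $\sqrt{2\pi}L$ as you intended. With this rearrangement --- split into one-sided terms instead of moving the absolute value inside --- your argument is correct and delivers the claimed bound.
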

For completeness, we report the proof of this proposition 
in Appendix \ref{app:B3}.

\subsection{Bounding the Excess Future Empirical Risk}

Providing bounds for the excess future empirical risk introduced in Eq.~\eqref{eq:ltl-error-decomposition} consists in studying the convergence rates of Algorithm~\ref{alg:general-pssa} to the minimum of $\hat \E$ over $\Dla$ {\em in high probability with respect to the sample of~$T$ tasks $\mu_t$ from $\rho$ and datasets $Z_t$ from $\mu_t^n$ for any $t\in \{1,\dots,T\}$}. 

To this end, we leverage classical results from the online learning literature \cite{hazan2016introduction}.  In online learning, the performance of 
an online algorithm returning a sequence $\{D^{(t)} \}_{t = 1}^T$ over $T$ trials is measured in terms of its {\em regret}, which in the context of this work corresponds to  
\begin{equation}\label{eq:regret-definition}
R_T = \frac{1}{T}\sum_{t=1}^T \L_{Z_t}(D^{(t)}) - \min_{D\in\Dla}\frac{1}{T}\sum_{t=1}^T \L_{Z_t} (D).
\end{equation}
Differently from the statistical setting considered in this work, in the online setting no 
assumption is made about the data generation process of $Z_1,\dots,Z_T$, which could be even adversely generated. Therefore, an algorithm that is able to solve the online problem (i.e. if its regret vanishes as $T \to \infty$) can be also expected to solve the corresponding problem in the statistical setting. This is indeed the case for Algorithm~\ref{alg:general-pssa}, for which the following lemma provides a non-asymptotic regret bound.

\begin{restatable}[Regret Bound for Algorithm~\ref{alg:general-pssa}]{lemma}{regretboundour}
\label{lem:regret-bound}
Let $\mathcal{X} \subseteq \mathcal{B}_1$, $\mathcal{Y} \subseteq [0,1]$ and $\ell$ be the square loss. 
Then the regret of Algorithm~\ref{alg:general-pssa} with step-sizes $\gamma_t = (\lambda \sqrt{2t})^{-1}$ 
is such that
\begin{equation}
R_T\leq  \frac{4 \sqrt{2}}{\lambda\sqrt{T}}.
\end{equation}
\end{restatable}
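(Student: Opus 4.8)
The plan is to view the sequence $\L_{Z_1},\dots,\L_{Z_T}$ as an online convex optimization problem on the domain $\Dla$ and apply the classical regret analysis of projected (sub)gradient descent, specializing the generic bound to the geometry of $\Dla$ and to the step sizes $\gamma_t = (\lambda\sqrt{2t})^{-1}$. By Proposition~\ref{prop:least-squares}, each $\L_{Z_t}$ is convex and differentiable on the PSD cone, its gradient $U_t = \nabla\L_{Z_t}(D^{(t)})$ satisfies $\|U_t\|_F \le 2$ (the $2$-Lipschitz property), and $\Dla$ is closed and convex. First I would record a diameter estimate: for any PSD matrix $D$ one has $\|D\|_F \le \tr(D) \le 1/\lambda$, so every pair $D,D' \in \Dla$ obeys $\|D-D'\|_F \le 2/\lambda$.

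The core of the argument is the standard one-step inequality. Fixing an arbitrary competitor $D \in \Dla$ and using nonexpansiveness of the Frobenius projection onto the convex set $\Dla$, together with convexity in the form $\L_{Z_t}(D^{(t)}) - \L_{Z_t}(D) \le \langle U_t, D^{(t)} - D\rangle$, I would derive
\begin{equation}
\L_{Z_t}(D^{(t)}) - \L_{Z_t}(D) \le \frac{\|D^{(t)} - D\|_F^2 - \|D^{(t+1)} - D\|_F^2}{2\gamma_t} + \frac{\gamma_t}{2}\|U_t\|_F^2 .
\end{equation}
This isolates the per-round excess loss as a telescoping squared-distance difference plus a controllable gradient term.

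Summing over $t=1,\dots,T$, I would handle the first block by Abel summation: dropping the nonnegative final term, bounding each $\|D^{(t)}-D\|_F^2 \le 4/\lambda^2$, and using that $1/\gamma_t$ is increasing, the block collapses to $\tfrac{1}{2\gamma_T}\cdot\tfrac{4}{\lambda^2} = 2/(\lambda^2\gamma_T)$, which equals $2\sqrt{2}\sqrt{T}/\lambda$ for the stated step sizes. For the gradient block, $\tfrac12\sum_t \gamma_t\|U_t\|_F^2 \le 2\sum_t \gamma_t = \tfrac{\sqrt2}{\lambda}\sum_{t=1}^T t^{-1/2}$, and the elementary estimate $\sum_{t=1}^T t^{-1/2} \le 2\sqrt{T}$ gives a second contribution of $2\sqrt{2}\sqrt{T}/\lambda$. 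Adding the two blocks and dividing by $T$ yields $\tfrac1T\sum_t[\L_{Z_t}(D^{(t)}) - \L_{Z_t}(D)] \le 4\sqrt2/(\lambda\sqrt T)$; since $D \in \Dla$ was arbitrary, the bound holds against the minimizer appearing in the definition of $R_T$.

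This is essentially textbook online gradient descent, so no single step is deep; the main thing to watch is the bookkeeping of constants, namely arranging the step sizes so that the telescoping distance block and the summed gradient block each contribute exactly $2\sqrt2\sqrt T/\lambda$. The two ingredients that make the constants line up cleanly are the PSD diameter estimate $\|D-D'\|_F \le 2/\lambda$ and the Lipschitz bound $\|U_t\|_F \le 2$ from Proposition~\ref{prop:least-squares}, and the only genuinely asymptotic piece is the harmonic-type inequality $\sum_{t\le T} t^{-1/2}\le 2\sqrt T$.
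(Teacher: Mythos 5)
Your proof is correct and follows essentially the same route as the paper: the paper proves the generic projected online subgradient regret bound (telescoping one-step inequality with nonexpansive projection, step sizes $c/\sqrt{t}$) and then specializes it to Algorithm~\ref{alg:general-pssa} using exactly your two ingredients, the Lipschitz bound $G=2$ from \autoref{prop:least-squares} and the diameter bound $2/\lambda$ on $\Dla$, with $c=(\lambda\sqrt{2})^{-1}$ being the optimal choice. Your constant bookkeeping matches the paper's bound exactly.
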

This lemma is a corollary of \autoref{prop:least-squares} combined with classical results on regret bounds for Projected Online Subgradient Algorithm \cite{hazan2016introduction}. We refer the reader to 
Appendix \ref{app:D1} for a more in-depth discussion and for a detailed proof.

In our setting, the datasets $Z_1,\dots,Z_T$ are assumed to be independently sampled from the underlying environment. Combining this assumption with the regret bound in \autoref{lem:regret-bound}, we can control the excess future empirical risk by means of so-called {\em online-to-batch conversion} results \cite{cesa2004generalization,hazan2016introduction}, leading to the following proposition.

\begin{restatable}[Excess Future Empirical Risk Bound for Algorithm~\ref{alg:general-pssa}]{proposition}{sgdrateour}
\label{prop:bound-empirical-transfer-error}
Let $\mathcal{X} \subseteq \mathcal{B}_1$, $\Y\subseteq[0,1]$ and let $\ell$ be the square loss. Let $\mu_1,\dots,\mu_T$ be independently sampled from $\rho$ and $Z_t$ sampled from $\mu_t^n$ for $t\in \{1,\dots,T\}$. Let $\bar{D}_T$ be the output of Algorithm~\ref{alg:general-pssa} with step sizes $\gamma_t = (\lambda\sqrt{2t})^{-1}$. Then, for any $\delta\in(0,1]$
\begin{equation}\label{eq:bound-empirical-error}
\hat \E(\bar{D}_T) - \hat \E(\hat D_*) \leq 
\frac{4 \sqrt{2}}{\lambda \sqrt{T}}
+ \sqrt{\frac{8\log(2/\delta)}{T}}
\end{equation}
with probability at least $1-\delta$. 
\end{restatable}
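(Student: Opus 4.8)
The plan is to prove this bound through an \emph{online-to-batch conversion}, converting the regret guarantee of \autoref{lem:regret-bound} into a statistical statement about the averaged iterate $\bar{D}_T$. The starting point is convexity: each $\L_Z$ is convex on the PSD cone by \autoref{prop:least-squares}, so the future empirical risk $\hat\E(D) = \Exp_{\mu\sim\rho}\Exp_{Z\sim\mu^n}\L_Z(D)$ is convex as an average of convex functions. Jensen's inequality then gives
\begin{equation}
\hat\E(\bar{D}_T) = \hat\E\Bigl(\tfrac{1}{T}\sum_{t=1}^T D^{(t)}\Bigr) \leq \frac{1}{T}\sum_{t=1}^T \hat\E(D^{(t)}),
\end{equation}
so it suffices to bound $\frac{1}{T}\sum_{t=1}^T \hat\E(D^{(t)}) - \hat\E(\hat D_*)$.

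Next I would add and subtract the empirical losses $\L_{Z_t}$ evaluated at the iterates. Writing $\frac{1}{T}\sum_t \L_{Z_t}(D^{(t)}) = R_T + \min_{D\in\Dla}\frac{1}{T}\sum_t \L_{Z_t}(D)$ and bounding the minimum by its value at $\hat D_*\in\Dla$, I obtain
\begin{align}
\frac{1}{T}\sum_{t=1}^T \hat\E(D^{(t)}) - \hat\E(\hat D_*)
& \leq \frac{1}{T}\sum_{t=1}^T\bigl[\hat\E(D^{(t)}) - \L_{Z_t}(D^{(t)})\bigr] + R_T \nonumber \\
& \quad + \frac{1}{T}\sum_{t=1}^T\bigl[\L_{Z_t}(\hat D_*) - \hat\E(\hat D_*)\bigr].
\end{align}
The middle term $R_T$ is exactly the regret, controlled by \autoref{lem:regret-bound} as $R_T \leq 4\sqrt{2}/(\lambda\sqrt{T})$, and the two remaining averages are to be handled by concentration.

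For the concentration step, the crucial structural fact is that the iterate $D^{(t)}$ is a deterministic function of $Z_1,\dots,Z_{t-1}$ only, hence measurable with respect to the filtration $\mathcal{F}_{t-1}$ generated by the first $t-1$ tasks, while $Z_t$ is drawn fresh and independently. Consequently $\xi_t := \hat\E(D^{(t)}) - \L_{Z_t}(D^{(t)})$ satisfies $\Exp[\xi_t \mid \mathcal{F}_{t-1}] = 0$, so $\{\xi_t\}$ is a martingale difference sequence; since $\L_Z \in [0,1]$ by property $5$ of \autoref{prop:least-squares}, we have $|\xi_t|\leq 1$, and the Azuma--Hoeffding inequality bounds $\frac{1}{T}\sum_t \xi_t \leq \sqrt{2\log(2/\delta)/T}$ with probability at least $1-\delta/2$. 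The last average consists of i.i.d. bounded terms $\eta_t := \L_{Z_t}(\hat D_*) - \hat\E(\hat D_*)$ (recall $\hat D_*$ is nonrandom), so Hoeffding's inequality yields the same bound with probability at least $1-\delta/2$. A union bound, together with $2\sqrt{2\log(2/\delta)/T} = \sqrt{8\log(2/\delta)/T}$, produces the claimed estimate.

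I expect the main obstacle to be the concentration step, and specifically the justification of the martingale-difference structure of $\xi_t$: one must verify carefully that the iterate $D^{(t)}$ entering the loss $\L_{Z_t}(D^{(t)})$ does not depend on the very sample $Z_t$ used to form that loss, which is precisely what legitimizes replacing the empirical quantity $\frac{1}{T}\sum_t \L_{Z_t}(D^{(t)})$ by the future empirical risk up to a vanishing fluctuation. The boundedness $\L_Z \in [0,1]$ from \autoref{prop:least-squares} is exactly the ingredient needed to invoke Azuma--Hoeffding, and once these structural points are in place the remaining estimates are routine.
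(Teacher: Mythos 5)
Your proof is correct and follows essentially the same route as the paper: the paper's proof combines the regret bound of \autoref{lem:regret-bound} with the online-to-batch conversion result of \autoref{sgd_rate_Hazan} (Theorem 9.3 in \cite{hazan2016introduction}, building on \cite{cesa2004generalization}), using the boundedness $\L_Z\in[0,1]$ from \autoref{prop:least-squares}-(5) exactly as you do. The only difference is that the paper invokes that conversion theorem as a black box, whereas you re-derive it inline via Jensen's inequality, the martingale-difference/Azuma--Hoeffding argument for the iterates $D^{(t)}$, and Hoeffding for the fixed comparator $\hat D_*$ --- which is precisely the standard proof of that theorem, so your argument is sound and self-contained, with the constants matching the stated bound.
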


The result above follows by combining \autoref{prop:least-squares} with online-to-batch results (see e.g. Thm.~$9.3$ in \cite{hazan2016introduction} and \cite{cesa2004generalization}). In 
Appendix \ref{app:D2} we provide the complete proof of this statement together with
a more detailed discussion about this topic. At this point we are ready to give the proof of \autoref{thm:main-ltl-online}.

\begin{proof}[{\bf Proof of} \bf\autoref{thm:main-ltl-online}.] The claim follows by combining \autoref{prop:uniform-generalization-bound} and \autoref{prop:bound-empirical-transfer-error} in the decomposition of the error $\E(\bar{D}_T) - \E(D_*)$ given in Eq.~\eqref{eq:ltl-error-decomposition}.  
\end{proof}


\section{ONLINE LTL VERSUS BATCH LTL}\label{sec:comparison-batch}

In this section, we compare the statistical guarantees obtained for our online meta algorithm 
with a state of the art batch LTL method for linear feature learning. We also comment on the computational cost of both procedures.

\subsection{Statistical Comparison}

Given a finite collection $\mdata = \{Z_1,\dots,Z_T\}$ of datasets,
a standard approach to approximate a minimizer of the future empirical risk $\hat \E$ is to take a
representation $\hat D_T$ minimizing the multitask empirical risk
\begin{equation}
\hat \E_\mdata(D) =  \frac{1}{T} \sum_{t = 1}^T \cR_{Z_t}(A_D(Z_t))
\label{eq:empirical-mtl-risk}
\end{equation}
over the set $\Dla$. Such a choice has been extensively studied in the LTL literature \cite{baxter2000model,maurer2009transfer,maurer2013sparse,maurer2016benefit}. Here we report a result analogous to \autoref{thm:main-ltl-online}, characterizing 
the discrepancy between the transfer risks of $\hat D_T$ and $D_*$.

\begin{restatable}[Batch LTL Bound]{theoremshortref}{batchtheorerecall}
\label{thm:main-batch}
Let $\mathcal{X} \subseteq \mathcal{B}_1$, $\Y\subseteq[0,1]$ and let $\ell$ be the square loss. Let tasks $\mu_1,\dots,\mu_T$ be independently sampled from $\rho$ and $Z_t$ sampled from $\mu_t^n$ for $t\in\{1,\dots,T\}$. Let $\hat D_T$ be a minimizer of the multitask empirical risk in Eq.~\eqref{eq:empirical-mtl-risk} {over the set $\Dla$}. Then, for any $\delta\in(0,1]$

\begin{equation}
\mathcal{E}(\hat D_T) - \mathcal{E}(D_*)
\le \frac{4 \sqrt{2 \pi} \norma{C_\rho}_\infty ^{1/2}}{\sqrt{n}}
\frac{1 + \sqrt{\lambda}}{\lambda} \\
+ \frac{2 \sqrt{2 \pi}}{\lambda \sqrt{T}} 
+  \sqrt{\frac{2\log \bigl( 2/\delta \bigr)}{T}}
\end{equation}
with probability at least $1-\delta$. 
\end{restatable}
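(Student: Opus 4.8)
The plan is to reuse the error decomposition of Eq.~\eqref{eq:ltl-error-decomposition}, replacing the online estimator $\bar D_T$ by the batch minimizer $\hat D_T$. That decomposition only uses that $\hat D_*$ minimizes $\hat\E$ over $\Dla$ (not any property of the estimator being compared), so it gives verbatim
\[
\E(\hat D_T) - \E(D_*) \le 2\sup_{D\in\Dla}|\E(D)-\hat\E(D)| + \bigl(\hat\E(\hat D_T) - \hat\E(\hat D_*)\bigr).
\]
The first term is bounded by \autoref{prop:uniform-generalization-bound} exactly as in the online case, producing $\tfrac{4\sqrt{2\pi}\|C_\rho\|_\infty^{1/2}}{\sqrt n}\tfrac{1+\sqrt\lambda}{\lambda}$, identical in the two theorems. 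All the work therefore goes into the excess future empirical risk $\hat\E(\hat D_T)-\hat\E(\hat D_*)$, which is the only place the batch argument diverges from the online one, since here $\hat D_T$ minimizes the empirical multitask risk $\hat\E_\mdata$ rather than being produced by a stochastic procedure.

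To control $\hat\E(\hat D_T)-\hat\E(\hat D_*)$ I would use the standard empirical-risk-minimization argument. Writing
\[
\hat\E(\hat D_T)-\hat\E(\hat D_*) = \bigl[\hat\E(\hat D_T)-\hat\E_\mdata(\hat D_T)\bigr] + \bigl[\hat\E_\mdata(\hat D_T)-\hat\E_\mdata(\hat D_*)\bigr] + \bigl[\hat\E_\mdata(\hat D_*)-\hat\E(\hat D_*)\bigr],
\]
the middle bracket is nonpositive because $\hat D_T$ minimizes $\hat\E_\mdata$ over $\Dla$. The last bracket involves the \emph{fixed} matrix $\hat D_*$, so, since $\L_Z(\hat D_*)\in[0,1]$ by \autoref{prop:least-squares}(5), it is an average of $T$ i.i.d.\ bounded variables minus its mean and is controlled by Hoeffding's inequality. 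The first bracket is bounded by the one-sided uniform deviation $\sup_{D\in\Dla}\bigl(\hat\E(D)-\hat\E_\mdata(D)\bigr)$; allocating $\delta/2$ to each high-probability statement and taking a union bound yields the concentration term $\sqrt{2\log(2/\delta)/T}$.

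The remaining and technically central step is to bound the expected uniform deviation. Here I would use symmetrization to pass to the Rademacher complexity $\mathfrak R_T = \Exp\sup_{D\in\Dla}\tfrac1T\sum_{t}\sigma_t\L_{Z_t}(D)$ of the loss class $\{\L_Z(\cdot): D\in\Dla\}$, together with a bounded-differences (McDiarmid) argument for concentration, which is legitimate because, by \autoref{prop:least-squares}(5), replacing one dataset $Z_t$ changes the supremum by at most $1/T$. To match the stated constant I would pass from Rademacher to Gaussian complexity via $\mathfrak R_T\le\sqrt{\pi/2}\,\mathfrak G_T$ and bound the Gaussian complexity of the loss class by $2/(\lambda\sqrt T)$ using the $2$-Lipschitz property of $\L_Z$ (\autoref{prop:least-squares}(3)) and the radius $1/\lambda$ of the trace constraint defining $\Dla$; the factor $\sqrt{\pi/2}$ times the Lipschitz constant $2$ is exactly the origin of the $\sqrt{2\pi}$ in the $T$-dependent term $2\sqrt{2\pi}/(\lambda\sqrt T)$.

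I expect the Gaussian-complexity estimate to be the main obstacle. A naive argument invoking only the Lipschitz constant and the Frobenius diameter of $\Dla$ would bound $\mathfrak G_T$ through the Gaussian width of the spectrahedron $\{D\succeq0,\ \tr(D)\le1/\lambda\}$, i.e.\ through $\Exp\lambda_{\max}$ of a Gaussian matrix, which scales like $\sqrt d$ and would spoil the dimension-free rate. The key point is therefore that $\L_Z$ depends on $D$ only through the $n\times n$ matrix $XDX\trans$, and that $\X\subseteq\B_1$ forces $\tr(XDX\trans)\le n\,\tr(D)\le n/\lambda$ uniformly over $\Dla$; exploiting this structure (equivalently, the fact that the Lipschitz and boundedness constants of \autoref{prop:least-squares} are themselves dimension-free) is what keeps the complexity bound, and hence the final rate, free of any dependence on the ambient dimension $d$. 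Assembling the three terms then gives the claimed bound.
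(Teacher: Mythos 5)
Your proposal is correct and, in substance, it is the same proof as the paper's. The paper decomposes $\E(\hat D_T)-\E(D_*)$ into three terms pivoting on $\hat\E_\mdata$ and $D_*$, bounds the two uniform-deviation pieces by \autoref{prop:uniform-generalization-bound} and \autoref{prop:uniform-generalization-bound-2} (the latter with confidence $\delta/2$), discards the middle term by the minimizing property of $\hat D_T$, and handles the remaining fixed-matrix term by Hoeffding's inequality (\autoref{Hoeff}) with confidence $\delta/2$ --- exactly the ingredients you assemble, with the same constants; your pivoting on $\hat D_*$ instead of $D_*$ is immaterial. The only real divergence is that the paper does not prove the $T$-dependent uniform bound $\sup_{D\in\Dla}|\hat\E(D)-\hat\E_\mdata(D)|$ at all: it imports it as \autoref{uniform_bound_general_2} (Maurer's Theorem~8) applied to the kernel-stable function $\data\mapsto\hat\cR(\data,A_D(\data))$, whereas you propose to derive it from scratch; your sketch (McDiarmid, symmetrization, Gaussian comparison exploiting that $\L_Z$ depends on $D$ only through $XDX\trans$) is in fact Maurer's own argument, and your diagnosis that a naive Lipschitz-plus-Frobenius-diameter bound would pick up a spurious $\sqrt{d}$ through the Gaussian width of $\Dla$ is exactly right. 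One caveat on that last step: your parenthetical claim that the dimension-free structure is ``equivalently'' captured by the Lipschitz and boundedness constants of \autoref{prop:least-squares} being dimension-free is not correct --- the naive $\sqrt{d}$ argument you rule out uses precisely those constants. What actually saves the rate is the kernel stability coming from \autoref{prop_inf_norm_tikh}: $\L_Z$ is $(2/n)$-Lipschitz in the Gram matrix $XDX\trans$, which is \emph{linear} in $D$, so Slepian's lemma reduces the Gaussian average over the loss class to $\Exp\sup_{D\in\Dla}\langle D,\Gamma\rangle\le\lambda^{-1}\,\Exp\norma{\Gamma}_2$ for a Gaussian-weighted sum $\Gamma$ of rank-one matrices built from the inputs, and it is the Frobenius (not operator) norm of $\Gamma$, controlled dimension-freely via $\norma{x}\le1$, that yields the $2\sqrt{2\pi}/(\lambda\sqrt{T})$ term; the trace bound $\tr(XDX\trans)\le n/\lambda$ you invoke is in the right spirit but is not the quantity that enters the computation.
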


The result above is obtained by further decomposing the error $\E(\hat{D}_T)-\E(D_*)$ as done in Eq.~\eqref{eq:ltl-error-decomposition}. In particular, since the multitask empirical error provides an estimate for the future empirical risk, it is possible to to control the overall error by further bounding the term $| \hat \E(D) - \hat\E_\mdata(D)|$ uniformly with respect to $D\in\Dla$. This last result was originally presented in \cite{maurer2009transfer};
in Appendix \ref{Appendix C} 
we report the complete analysis of such decomposition, leading to the bound in \autoref{thm:main-batch}.

\subsection{Statistical Considerations}
\label{sec:5.2}
We now are ready to compare the bounds on the excess transfer risk
for the representations resulting from the application of the online procedure 
(see \autoref{thm:main-ltl-online}) and the batch one (see \autoref{thm:main-batch});
this provides a first indication of the behavior of the different algorithms. However, it should be kept in mind that we are comparing upper bounds, hence our considerations are not conclusive and further analysis by means of lower bounds for both algorithms would be valuable.

At first, we observe that both bounds reflect the fact that the LTL setting is more 
challenging than the MTL setting. As we have remarked in Sec. \ref{sec:MTL}, MTL aims at bounding the excess averaged expected risk relative to a {\em fixed} set of $T$ tasks. 
Such bounds can be expressed as a function $B(n,T)$, with $B(n, T) \to 0$
as $n \to \infty$ for {\em any} $T$. 
In LTL, the bounds study the behavior of the excess transfer
risk, measuring how well the candidate estimator will work 
on a {\em new} task sampled from the environment. Therefore,  
it is no longer possible that $B(T, n) \to 0$
for $n \to \infty$ for any $T$, we only have that $B(T, n) \to 0$ when $n$ and $T$ simultaneously tend to infinity.
 
\autoref{thm:main-ltl-online} and \autoref{thm:main-batch} are both composed of three terms. The first term is exactly the same for both procedures and this is obvious looking
at the decompositions used to deduce both results. This term can be interpreted as
a within-task-estimation error, that depends on the number of points 
$n$ used to train the underlying learning algorithm (in our case Ridge Regression with a linear 
feature map). This term, similarly to the MTL setting, highlights the advantage of exploiting the relatedness of the tasks in the learning process in comparison to independent task learning (ITL). Indeed, if the inputs are distributed on a high dimensional manifold, then $\|C_\rho\|_\infty \ll 1$, while upper bounds for ITL have a leading constant of 1. In particular, $\|C_\rho\|_\infty = 1/d$ if the marginal distributions of the tasks are uniform on the $d-1$ dimensional unit sphere; see also \cite{maurer2009transfer,maurer2016benefit} for a more detailed discussion. The last term in the bounds expresses the dependency on the confidence parameter $\delta$ and it is again approximately the same for the batch and the online case.  It follows that the main role in the comparison between the online and batch bounds is driven by the
middle term, which expresses the dependency of the bound on the number of tasks $T$. This term originates in 
different ways: in the batch approach it is derived from the application of uniform bounds
and it can be interpreted as an inter-task estimation error, while in the online approach,
it plays the role of an optimization error. Despite the different derivations, 
we can ascertain from the explicit formula of the bounds that this term is approximately the same for both procedures. This is remarkable since it implies that the representation resulting from our online procedure enjoys the same statistical guarantees than the batch one, despite its more parsimonious memory and computational requirements. 

\subsection{Computational Considerations}\label{sec:comparison-batch-computations}

After discussing the theoretical comparison between the online and the batch LTL approach, in this section we point out some key aspects regarding the computational costs of both procedures.
\vspace{.2truecm}

\noindent {\bf Memory}. The batch LTL estimator corresponds to the minimizer of the empirical risk in Eq.~\eqref{eq:empirical-mtl-risk} over {\em all tasks observed so far}. The corresponding approach would therefore require storing in memory all training datasets as they arrive in order to perform the optimization. This is clearly not sustainable in the incremental setting, since tasks are observed sequentially and, possibly indefinitely, which would inevitably lead to a memory overflow. On the contrary, in line with most stochastic methods, online LTL has a small memory footprint, since it requires to store only one dataset at the time, allowing to ``forget'' it as soon as one gradient step is performed.
\vspace{.2truecm}

\noindent {\bf Time}. Online LTL is also advantageous in terms of the number of iterations performed whenever a new task is observed. Indeed, for every new task, online LTL performs {\em only one step} of gradient descent for a total of $T$ steps after $T$ tasks. On the contrary, batch LTL requires finding a minimizer for Eq.~\eqref{eq:empirical-mtl-risk}, which cannot be obtained in closed form but requires adopting an iterative method such as Projected Gradient Descent (see e.g. \cite{combettes2005signal}). These methods typically require $k$ iterations to achieve an error of the order of $O(1/k)$ from the optimum (better rates are possible adopting accelerated methods). However, since for any new task batch LTL needs to find a minimizer for the multitask empirical error in Eq.~\eqref{eq:empirical-mtl-risk} from scratch, this leads to a total of $Tk$ iterations after $T$ tasks. Noting that every such iteration requires to compute $T$ gradients of $\L_Z$ in contrast to the single one of PSSA, this shows that online LTL requires much less operations. {In the batch case,} a ``warm-restart'' strategy can be adopted to initialize the Projected Gradient Descent with the representation learned during the previous step, however, as we empirically observed in Sec.~\ref{sec:experiments}, online LTL is still significantly faster than batch.


\section{EXPERIMENTS}\label{sec:experiments}

In this section, we report preliminary empirical evaluations of the online LTL strategy proposed in this work. In particular we compare our method with its batch (or offline) counterpart \cite{maurer2009transfer} and independent task learning (ITL) with standard Ridge Regression.

In all experiments, we obtain the online and batch estimators $\bar D_{\lambda,T_{\rm tr}}$ and $\hat D_{\lambda,T_{\rm tr}}$ by learning them on a dataset $\mdata_{\rm tr}$ of $T_{\rm tr}$ {\em training} tasks, each comprising $n$ input-output pairs $(x,y)\in\X\times\Y$. Below to simplify our notation we omit the subscript $T_{\rm tr}$ in these estimators. We perform this training for different values of $\lambda\in \{\lambda_1,\dots,\lambda_m\}$ and select the best estimator based on the prediction error measured on a separate set $\mdata_{\rm va}$ of $T_{\rm va}$ {\em validation} tasks. Once such optimal $\lambda$ value has been selected, we report the generalization performance of the corresponding estimator on a set $\mdata_{\rm te}$ of $T_{\rm te}$ {\em test} tasks. Note that the tasks in the test and validation sets $\mdata_{\rm te}$ and $\mdata_{\rm va}$ are all provided with both a training and test datasets $Z,Z'\in\Z^n$. Indeed, in order to evaluate the performance of a representation $D$, we need to first train the corresponding algorithm $A_D$ on $Z$, and then test its performance on $Z'$~(sampled from the same distribution), by  computing the empirical risk $\cR_{Z'}(A_D(Z))$. For all methods considered in this setting, we perform parameter selection over $30$ candidate values of $\lambda$ over the range $[10^{-6},10^3]$ with logarithmic spacing.

\begin{figure}[t!]
\centering
\includegraphics[width=0.58\columnwidth]{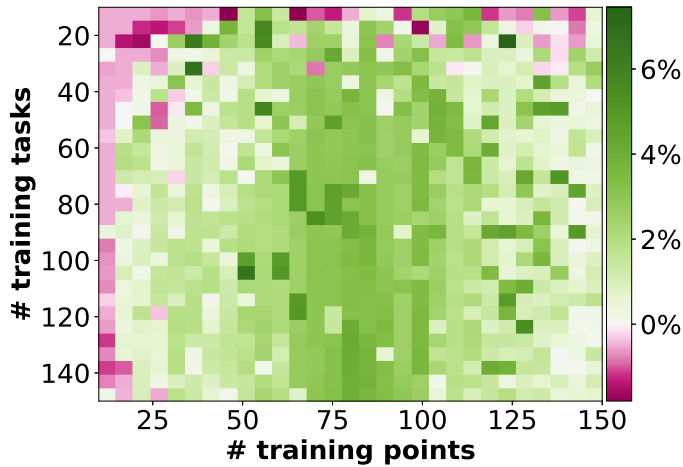}
\caption{Relative improvement (in $\%$) of our online LTL algorithm over the ITL baseline for a varying range of training tasks and number of samples per task.\label{fig:grid}
}
\end{figure}

In the online setting the training datasets arrive incrementally and a few at the time. Therefore model selection is performed {\em in parallel}: the system keeps track of all candidate representation matrices $\bar D_{\lambda_1},\dots,\bar D_{\lambda_m}$ and whenever a new training task is presented, these matrices are all updated by incorporating the corresponding new observations. The best representation is then returned at each iteration, 
based on its performance on the validation set $\mdata_{\rm va}$.
\vspace{.2truecm}

\noindent {\bf Synthetic Data}. We considered a regression problem on $\X \subseteq \R^d$ with $d = 50$ and a variable number of training tasks $T_{\rm tr} \in \{ 10,\dots 150\}$ each comprising $n \in \{10, \dots, 150\}$ training points. We also generated $T_{\rm te} = 100$ test tasks and we sampled a number $T_{\rm va}$ of validation tasks equal to $25\%$ of $T_{\rm tr}$. For each task, the corresponding dataset $(x_i,y_i)_{i=1}^n$ was generated according to the linear regression equation $y = w\trans x + \epsilon$, with $x$ sampled uniformly on the unit sphere in $\mathbb{R}^d$ and $\epsilon\sim\mathcal{N}(0,0.2)$. Analogously to the input points, the tasks predictors $w\in\R^d$ were generated as $P\tilde w$ with the components of $\tilde w\in\R^{d/2}$ sampled from $\mathcal{N}(0,1)$ and then $\tilde w$ normalized to have unit norm, and $P\in\R^{d\times d/2}$ a matrix with orthonormal rows. The tasks $w$ were generated according to this model to reflect the assumption of sharing a low dimensional representation, which needs to be inferred by the LTL algorithm.

\begin{figure}[t!]
\centering
\includegraphics[width=0.58\columnwidth]{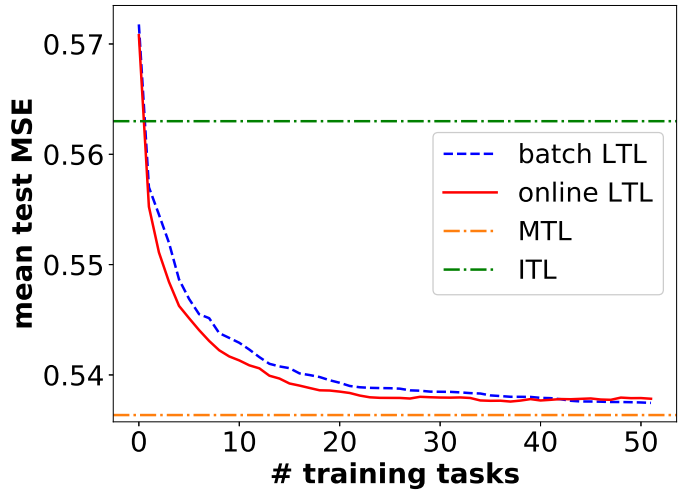}
\caption{Performance of online LTL, batch LTL, ITL and MTL (on the test set) during one single trial of online learning on the synthetic dataset as the number of training tasks increases incrementally.
\label{fig:one-run-synth}
}
\end{figure}
Figure~\ref{fig:grid} reports the comparison between the baseline ITL and the proposed online LTL approach in terms of the relative difference of the prediction error on test tasks for the two methods. More precisely, given the mean squared errors (MSE) $R_{\rm oLTL}$ of online LTL and $R_{\rm ITL}$ of ITL averaged across the test tasks, we report the ratio $(R_{\rm ITL} - R_{\rm oLTL})/R_{\rm ITL}$ as a percentage improvement. Results are reported across a range of $T_{\rm tr}$ and $n$. We note that the regime considered for these experiments is particularly favorable to LTL, consistently outperforming ITL, which does not leverage any shared structure. As noted in Sec. \ref{sec:5.2}, when the number of training points per task is small, the LTL algorithm is unable to capture the underlying representation, even if several tasks are provided in training. To provide further evidence of the performance of online LTL, Figure~\ref{fig:one-run-synth} compares the prediction error of online LTL, batch LTL, and ITL as the number of training tasks $T_{\rm tr}$ increases from $1$ to $50$ 
and the number of samples per task is fixed to $n=25$. In particular we considered the setting where the task datasets are provided incrementally one at the time and the different methods update their corresponding representation accordingly. We also report the performance of the multitask algorithm (MTL) described in Sec.~\ref{sec:MTL}, performing trace norm regularization 
{\em on the test set}. Clearly, MTL does not fit the learning to learn setting since it optimizes the representation on the test set. In this sense, loosely speaking, the MTL performance provides a lower bound on the performance that we can expect from an ``ideal'' LTL algorithm. Indeed, we note that MTL consistently outperforms all LTL methods which, however, tend to converge to it as more training tasks are provided.

\begin{table}[t]
    \small
    \caption{Computational times (in seconds) of online and batch LTL for a varying number $T_{\rm tr}$ of training tasks and $n$ of samples per task.}\label{tab:computational-performance}
    \begin{center}
        \begin{tabular}{rcc|cc|cc}
            \toprule
            ${\boldsymbol T_{\rm tr}}$  & \multicolumn{2}{c|}{50} & \multicolumn{2}{c|}{100} & \multicolumn{2}{c}{150}\\
            ${\boldsymbol n}$  & 20 & 50 & 20 & 50 & 20 & 50\\
            \midrule
            {\bf Batch} & 85 & 227  & 246 & 617 & 428 & 2003 \\
            {\bf Online} & 36 & 86 & 108 & 273 & 227 & 776 \\
            \hline
        \end{tabular}
    \end{center}
\end{table}

Interestingly, the online approach is able to rapidly close the gap with batch LTL as the number of training tasks increases. 
This is particularly favorable since, from the computational perspective, online LTL is significantly faster than its batch counterpart. To further emphasize this aspect, Table~\ref{tab:computational-performance} compares the computational times required on average 
by online LTL and batch LTL as $T_{\rm tr}$ and $n$ vary. Online LTL is clearly faster then batch LTL. Indeed, as discussed in Sec.~\ref{sec:comparison-batch-computations}, whenever a new training task is provided, batch LTL requires to perform hundreds or thousands iterations of gradient descent to converge to a minimizer, even when ``warm restarting'' by initializing with the representation found at the previous step. On the other hand online LTL performs {\em a single gradient step} for each new task.

\begin{figure}[b!]
\centering
\includegraphics[width=0.58\columnwidth]{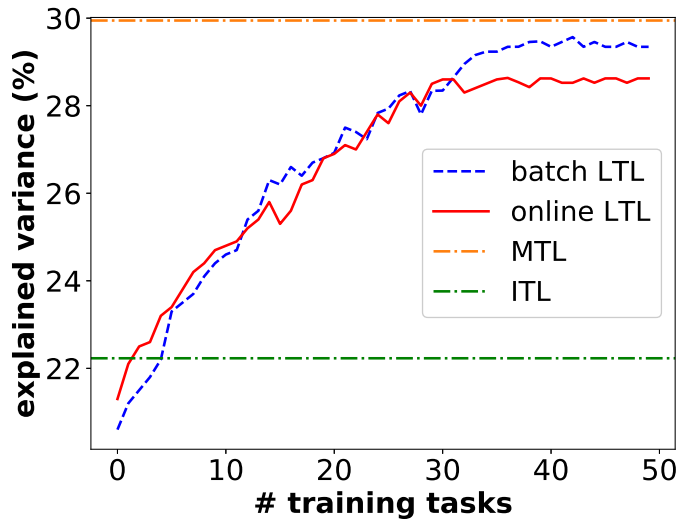}
\caption{Percentage explained variance of online LTL, batch LTL, ITL and MTL (on the test set) during one single trial of online learning on the Schools dataset as the number of training tasks increases incrementally.\label{fig:one-run-schools}
}
\end{figure}

\vspace{.2truecm}
\noindent {\bf Schools Dataset}. We evaluated online LTL on the Schools dataset, a dataset provided by the Inner London Education Authority (ILEA) and consisting of examination records from $139$ schools (see \cite{argyriou2008convex} for more details). Each school is associated to a regression task, individual students correspond to the input and their exam scores to the output. Input features belong to a $d=26$-dimensional input space $\X \subseteq \R^d$. 
We randomly sampled $25\%$ and $50\%$ of the $139$ tasks for LTL training and validation respectively and the remaining tasks were used as test set.
Figure~\ref{fig:one-run-schools} reports the performance of online LTL, batch LTL, ITL and MTL over one single run. 
Performance are reported in terms of the Explained Variance on the tasks \cite{argyriou2008convex} (higher values correspond to better performance). We note that the performance of the four compared methods are consistent with what we observed in the synthetic setting. In particular, online LTL is comparable to batch LTL. As expected, MTL outperforms all methods since it is able to exploit the relations between {\em test tasks}, which is not available to LTL algorithms.

\section{CONCLUSION AND FUTURE WORK}
\label{sec:conclusions}

In this work, we have proposed an incremental approach to LTL which
estimates a linear data representation
that works well on regression tasks coming from a meta distribution. 
{Compared with its batch (or offline) counterpart, this incremental approach is computationally more efficient both in terms of memory and number of operations, while enjoying the same generalization properties.}
Preliminary experiments 
have highlighted the favorable learning capability of the proposed learning-to-learn strategy.  To our knowledge this is the first efficient incremental algorithm for
meta learning for which statistical guarantees have been proved. Previous works either relied on algorithms which require to store the entire data sequence \cite{alquier2016regret} or which do not have statistical guarantees \cite{ruvolo2013ella}. Our analysis open several directions that will be worth investigating in the near future. First, it would be valuable to extend our analysis to a general class of loss functions. Although not allowing for a closed form expression as in the case of Ridge Regression, we suspect that it would be still be possible to extend our results by leveraging the regularity properties of the underlying learning algorithm. Second, we would like to depart from the feature learning setting by considering a more general family of learning-to-learn algorithms. Inspiration towards this direction is offered by the literature on multitask learning.


\appendix

\section*{APPENDIX}

\section{PROOF of \autoref{prop:least-squares}}

\label{Appendix A}

We denote by $\mathbb{S}^d$, $\mathbb{S}_+^d$ and $\mathbb{S}_{++}^d$ the sets of symmetric, positive semidefinite (PSD) and positive definite $d \times d$ real matrices, respectively. We denote by $\langle \cdot, \cdot \rangle$ the standard inner product in $\R^d$ (or $\R^n$, depending on the context) and by $\norma{\cdot}$ the associated norm. For any $p \in [1, \infty]$, the $p$-Schatten norm of a matrix
will be denoted  by $\norma{\cdot}_p$. Note that $\norma{\cdot}_1$, $\norma{\cdot}_2$ and $\norma{\cdot}_\infty$ are the trace, Frobenius and spectral norms, respectively.

Recall the definition of the function $\L_Z$  in Eq.~\eqref{eq:ridge-regression-obj-funct}. In order to provide the proof of \autoref{prop:least-squares}
we need the following Lemma. 
\begin{lemma}[Lemma 11 in \cite{maurer2005algorithmic}]\label{prop_inf_norm_tikh}
If $G_1, G_2 \in \mathbb{S}_{+}^d$, then for any $\gamma > 0$ and for $i = 1,2$, 
the following points hold.
\begin{enumerate}
\item[{\rm (a)}] $G_i + \gamma I$ is invertible.
\item[{\rm(b)}] $\big \| \bigl( G_i + \gamma I \bigr)^{-1} \big \|_\infty \le {\gamma}^{-1}$. 
\item[{\rm(c)}] $\big \| \bigl( G_1 + \gamma I \bigr)^{-1} - \bigl( G_2 + \gamma I \bigr)^{-1} \big \|_\infty 
\le{\gamma^{-2}} \big \| G_1 - G_2 \big \|_\infty$.
\item[{\rm(d)}] Let $w_1$ and $w_2$ satisfy $\bigl( G_i + \gamma I \bigr) w_i = \datay$ for some 
$\datay$, for $i = 1,2$. Then we have that
\begin{equation}
\Big |\norma{w_1}^2 - \norma{w_2}^2 \Big | \le 2 \gamma^{-3} 
\big \| G_1 - G_2 \big \|_\infty \norma{\datay}^2.
\end{equation}
\end{enumerate}
\end{lemma}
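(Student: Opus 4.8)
The plan is to prove the four claims in order, using elementary spectral arguments for (a) and (b), the resolvent identity for (c), and a combination of all three earlier parts for (d). Throughout I would exploit that each $G_i \in \mathbb{S}_+^d$ is symmetric, so the matrices involved are symmetric and their spectral norm $\norma{\cdot}_\infty$ coincides with the largest eigenvalue in absolute value.

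For (a) and (b) I would argue via the spectrum. Since $G_i$ is PSD its eigenvalues are nonnegative, so every eigenvalue of $G_i + \gamma I$ is the corresponding eigenvalue of $G_i$ shifted by $\gamma > 0$ and hence is at least $\gamma$. In particular $G_i + \gamma I$ is positive definite and therefore invertible, giving (a). Its inverse is symmetric with eigenvalues in $(0,\gamma^{-1}]$, so its spectral norm is at most $\gamma^{-1}$, giving (b).

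For (c) the key is the resolvent (second) identity
\[
(G_1 + \gamma I)^{-1} - (G_2 + \gamma I)^{-1} = (G_1 + \gamma I)^{-1}(G_2 - G_1)(G_2 + \gamma I)^{-1},
\]
obtained by left- and right-multiplying the trivial identity $(G_2 + \gamma I) - (G_1 + \gamma I) = G_2 - G_1$ by the two inverses. Taking spectral norms, using submultiplicativity of $\norma{\cdot}_\infty$ and the bound from (b) on each factor, yields $\norma{(G_1 + \gamma I)^{-1} - (G_2 + \gamma I)^{-1}}_\infty \le \gamma^{-2}\norma{G_1 - G_2}_\infty$. For (d) I would set $A_i := (G_i + \gamma I)^{-1}$, so that $w_i = A_i \datay$ and, since $A_i$ is symmetric, $\norma{w_i}^2 = \langle \datay, A_i^2 \datay\rangle$. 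Then $\norma{w_1}^2 - \norma{w_2}^2 = \langle \datay, (A_1^2 - A_2^2)\datay\rangle$, and I would factor the difference of squares as $A_1^2 - A_2^2 = A_1(A_1 - A_2) + (A_1 - A_2)A_2$. Bounding the quadratic form by $\norma{A_1^2 - A_2^2}_\infty \norma{\datay}^2$ (Cauchy--Schwarz together with the definition of the operator norm) and then applying submultiplicativity with (b), $\norma{A_i}_\infty \le \gamma^{-1}$, and (c), $\norma{A_1 - A_2}_\infty \le \gamma^{-2}\norma{G_1 - G_2}_\infty$, to each of the two summands produces the factor $2\gamma^{-3}\norma{G_1 - G_2}_\infty$, which is the claimed bound.

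The only step that requires genuine care is (d): one should avoid differentiating or estimating $\norma{w}^2$ directly and instead use the clean additive factorization of $A_1^2 - A_2^2$, so that the previously established spectral bounds (b) and (c) can be inserted termwise. Once the spectral decomposition and the resolvent identity are in place, the remaining computations are routine applications of submultiplicativity of the spectral norm.
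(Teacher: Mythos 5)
Your proof is correct in all four parts: the spectral argument gives (a) and (b), the resolvent identity $(G_1+\gamma I)^{-1}-(G_2+\gamma I)^{-1}=(G_1+\gamma I)^{-1}(G_2-G_1)(G_2+\gamma I)^{-1}$ gives (c), and the factorization $A_1^2-A_2^2=A_1(A_1-A_2)+(A_1-A_2)A_2$ combined with $\abs{\langle \datay,(A_1^2-A_2^2)\datay\rangle}\le \norma{A_1^2-A_2^2}_\infty\norma{\datay}^2$ gives (d) with the constant $2\gamma^{-3}$. Note that the paper itself does not prove this lemma at all; it imports it verbatim as Lemma 11 of the cited Maurer (2005) reference, and your argument is the standard one used there, so your proposal is a valid self-contained substitute for the citation rather than an alternative to any proof in the paper.
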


\begin{proof}[{\bf Proof of \autoref{prop:least-squares}}]
We now prove each point in turn.
\begin{enumerate}
\item Recall that a function $h: \mathbb{S}^d \rightarrow \mathbb{S}^d$ is matrix-convex if for every $A,B \in \mathbb{S}^d$ and $\lambda\in [0,1]$, $h(\lambda A + (1-\lambda)B) \preceq \lambda h(A) + (1-\lambda) h(B)$, see e.g. \cite[Chap.~V ]{bhatia2013matrix}. The function $h(A) = A^{-2}$ is matrix convex on $\mathbb{S}_{++}^d$. It follows, for every $\datay \in \R^n$, that the real-valued function $g_\datay: \mathbb{S}_{++}^d \rightarrow \R$ defined at $A \in \mathbb{S}_{++}^d$ as $g_\datay(A) =\langle \datay,A^{-2}\datay \rangle$ is convex. By Eq.~\eqref{eq:ridge-regression-obj-funct}, we have that $\L_Z(D)=g_\datay(X D X\trans + n I)$, hence it is convex because it is the composition of the convex function $g_\datay$ with an affine function.
\item Since the function $\L_Z$  in Eq.~\eqref{eq:ridge-regression-obj-funct}
is the composition of $\mathcal{C}^\infty$ functions, it is itself $\mathcal{C}^\infty$ on 
$\mathbb{S}_+^d$; therefore, as soon as we restrict it to a bounded subset of 
$\mathbb{S}_+^d$, all its derivatives\footnote{On the boundary of the set we define the derivatives by continuity.} 
 become Lipschitz. In this section we will use formula deriving from matrix calculus, we refer 
to the books \cite{kollo2006advanced, petersen2008matrix} for more details. Recalling the notation 
$M(D) = X D X\trans + n I \in \mathbb{R}^{n \times n}$, we now compute the Jacobian of the function 
$\L_Z$. Denoting by $x^k$ the $k$-th column 
of the matrix $X$ (it will be a column vector) for $k = 1, \dots, d$, we first show, 
for every $i, j \in \{1, \dots, d \}$,  that
\begin{equation}
\begin{split}
\bigl[ \nabla \L_Z(D) \bigr]_{i,j} & = - n ~ {\tr} \Big( \datay \datay \trans M(D)^{-1} 
\Big( x^i {x^j}\trans M(D)^{-1} + M(D)^{-1} x^i {x^j}\trans \Big)  M(D)^{-1} \Big) \\ 
& = - n \Big \langle \datay,  M(D)^{-1} \Bigl( x^i {x^j}\trans M(D)^{-1} + M(D)^{-1} x^i {x^j}\trans\Big) 
M(D)^{-1} \datay \Big \rangle.
\end{split}
\label{oooo}
\end{equation}
To see this, we first exploit the cyclic property of the trace to rewrite, for any $Z \in \mathcal{Z}^n$ and 
$D \in \mathbb{S}_+^d$, the function $\L_Z$ in Eq.~\eqref{eq:ridge-regression-obj-funct} as
\begin{equation}
\begin{split}
\L_Z(D) & = 
n~\big \langle \datay, M(D)^{-2} \datay \big \rangle
= n~{\tr}\Bigl( \datay \trans M(D)^{-2} \datay \Bigr) 
= n~{\tr}\Bigl( \datay \datay \trans M(D)^{-2}\Bigr)
= n~f \bigl( U(D) \bigr)
\nonumber
\end{split}
\end{equation}
where for any matrix $V \in \R^{n \times n}$ we have introduced the function
$f\bigl( V \bigr) = {\tr} (\datay \datay\trans V)$ and the symmetric matrix 
$U(D) = M(D)^{-2} \in \R^{n \times n}$. Hence, since $\displaystyle \frac{\partial f(V)}{\partial V} = 
\datay \datay \trans$ for any symmetric $V$ \cite[Eq.~(93)]{petersen2008matrix}, thanks to the chain rule
 \cite[Eq.~(126)]{petersen2008matrix}, for any $i, j \in \{1, \dots, d \}$, we have that
\begin{equation}
\frac{\partial \L_Z(D)}{ \partial D_{ij}} =
n~{\tr} \Bigl(\frac{\partial f(U(D))}{\partial U(D)}\trans \frac{\partial U(D)}{\partial D_{ij}} \Bigr) =
n~{\tr} \Bigl(\datay \datay \trans \frac{\partial U(D)}{\partial D_{ij}} \Bigr)
= n~\Big \langle \datay, \frac{\partial U(D)}{\partial D_{ij}} \datay \Big \rangle .
\end{equation}
Moreover,
the following formula, which is a direct consequence of Eq. ($33$) 
and Eq. ($53$) in \cite{petersen2008matrix}, holds:
\begin{equation}
\begin{split}
\frac{\partial M(D)^{-2}}{\partial D_{i,j}} 
& = - M(D)^{-1} \Bigl(\frac{\partial M(D)}{\partial D_{i,j}} M(D)^{-1} +
M(D)^{-1} \frac{\partial M(D)}{\partial D_{i,j}} \Bigr) M(D)^{-1}
\end{split}
\label{gggg}
\end{equation}
and since, for every $k, h \in \{1, \dots, n \}$, we have that
\begin{equation}
\Big[ \frac{\partial M(D)}{\partial D_{i,j}} \Big]_{kh} = 
\Big[\frac{\partial \bigl( X D X\trans \bigr)}{\partial D_{i,j}} \Big]_{kh}
= x^i_k x^j_h = \big[ x^i {x^j}\trans \big]_{kh}.
\end{equation}
Substituting in Eq.~\eqref{gggg} we obtain:
\begin{equation}
\begin{split}
\frac{\partial U(D)}{\partial D_{i,j}} & =
\frac{\partial M(D)^{-2}}{\partial D_{i,j}} 
= - M(D)^{-1} \Bigl( x^i {x^j}\trans
M(D)^{-1} + M(D)^{-1} x^i {x^j}\trans \Bigr) M(D)^{-1}
\end{split}
\end{equation}
and this conclude the proof of Eq.~\eqref{oooo}. 
Now, using the fact that for two $n \times 1$ vectors 
$v$ and $w$, we have that ${x^i}\trans v, {x^j}\trans w \in \mathbb{R}$ 
and
\begin{equation}
\bigl( {x^i}\trans v \bigr) \bigl( {x^j}\trans w \bigr) 
= \big[ X\trans v \big]_i \big[ X\trans w \big]_j 
= \big[ X\trans v w\trans X \big]_{ij},
\end{equation}
and exploiting the symmetry of $M(D)$, we can rewrite:
\begin{equation}
\begin{split}
\bigl[ \nabla \L_Z(D) \bigr]_{i,j}  & = 
- n ~\Big \langle \datay,  M(D)^{-1} \Bigl( x^i {x^j}\trans M(D)^{-1} + 
M(D)^{-1} x^i {x^j}\trans\Bigr)  M(D)^{-1} \datay \Big \rangle \\ 
& = - n ~\Big \langle \datay,  M(D)^{-1} x^i {x^j}\trans M(D)^{-2} \datay \Big 
\rangle - n \Big \langle \datay,  M(D)^{-2} x^i {x^j}\trans M(D)^{-1} \datay \Big \rangle \\
& = - n ~\Bigl( {x^i}\trans \underbrace{M(D)^{-1} \datay}_v \Bigr)
\Bigl( {x^j}\trans \underbrace{M(D)^{-2} \datay}_w \Bigr) - n ~
\Bigl( {x^i}\trans \underbrace{M(D)^{-2} \datay}_v \Bigr)  \Bigl( {x^j}\trans 
\underbrace{M(D)^{-1} \datay}_w \Bigr) \\
& = - n ~\Big[ X\trans M(D)^{-1} \datay \datay \trans M(D)^{-2} X \Big]_{ij} - n 
~\Big[ X\trans M(D)^{-2} \datay \datay \trans M(D)^{-1} X \Big]_{ij} \\
& = - n ~\Big[ X\trans M(D)^{-1} \Bigl( \datay \datay \trans M(D)^{-1} 
+ M(D)^{-1} \datay \datay \trans \Bigr) M(D)^{-1} X \Big]_{ij}.
\end{split}
\end{equation}
This last equation contains the elements of the Jacobian in the statement of the proposition.
\item In order to compute the Lipschitz constant of the function $\L_Z$ we first recall, 
for any $D\in \mathbb{S}_+^d$ and $Z \in \Z^n$, the expression 
$\L_Z(D) = n \big \| \bigl(X D X\trans + n I \bigr)^{-1} \datay \big\|^2$
in Eq.~\eqref{eq:ridge-regression-obj-funct}. Consequently,
for any $D_1, D_2 \in \mathbb{S}_+^d$ we have that
\begin{equation}
\begin{split}
\big | \L_Z(D_1) - \L_Z(D_2) \big | & = 
n  \Big | \big \| \bigl(X D_1 X\trans + n I \bigr)^{-1} \datay \big \|^2 - 
\big \| \bigl(X D_2 X\trans + n I \bigr)^{-1} \datay \big \|^2 \Big | \\
& \le \frac{2 n}{n^3} \big \| X D_1 X\trans -  X D_2 X\trans\big \|_\infty 
\norma{\datay}^2 \\
& = \frac{2 }{ n^2} \big \| X \bigl( D_1 -  D_2 \bigr) X\trans \big \|_\infty 
\norma{\datay}^2 \\
& \le \frac{2 }{n^2} \norma{X}_\infty^2  \norma{\datay}^2 \big \| D_1 -  D_2 \big \|_\infty \\
& \le \frac{2 }{n^2} \norma{X}_\infty^2  \norma{\datay}^2 \big \| D_1 -  D_2 \big \|_2,
\end{split}
\end{equation}
where in the first inequality we have applied \autoref{prop_inf_norm_tikh}-(d)
with $G_i = X D_i X\trans$, for $i = 1,2$. The statement now follows observing that
if $\mathcal{Y} \subseteq [0,1]$, then $\norma{\datay}^2 \le n$ and if $\mathcal{X} \subseteq
\mathcal{B}_1$, then $\norma{X}_\infty^2 \le n$.
\item We now compute the Lipschitz constant of the
gradient $\nabla \L_Z$. In the following
we will use the more compact notation $M_1 = M(D_1)$ and 
$M_2 = M(D_2)$, for any $D_1, D_2 \in \mathbb{S}_+^d$, 
and $R =  \datay \datay\trans$. Exploiting the following facts:
\begin{enumerate}
\item $\norma{AB}_2 \le \norma{A}_\infty \norma{B}_2$ for any
two matrices $A$ and $B$,
\item by \autoref{prop_inf_norm_tikh}-(b):
$\norma{M_i^{-1}}_\infty \le1/ n$ for $i = 1,2$,
\item by \autoref{prop_inf_norm_tikh}-(c):
\begin{equation*}
\begin{split}
\big \|  M_1^{-1} - M_2^{-1} \big \|_\infty & = 
\big \|  \bigl( X D_1 X\trans + nI \bigr)^{-1} - \bigl( X D_2 X\trans 
+ nI \bigr)^{-1} \big \|_\infty \\
& \le \frac{1}{n^2}\big \|  X D_1 X\trans- X D_2 X\trans\big \|_\infty \\
&\le \frac{1}{n^2} \norma{X}^2_\infty \big \|  D_1 -D_2 \big \|_\infty,
\end{split}
\end{equation*}
\item $\displaystyle \big \|  M_1^{-2} - M_2^{-2} \big \|_\infty = 
\big \| M_1^{-1} \bigl( M_1^{-1} - M_2^{-1} \bigr) +
\bigl( M_1^{-1} - M_2^{-1} \bigr) M_2^{-1} \big \|_\infty
\le \frac{2}{n} \big \|  M_1^{-1} - M_2^{-1} \big \|_\infty$,
\item if $\mathcal{X} \subseteq \mathcal{B}_1$ and $\mathcal{Y} \subseteq [0,1]$,
then $\norma{X}_2 \le \sqrt{n}$ and 
$\norma{R}_\infty = \norma{\datay \datay \trans}_\infty \le n$,
\end{enumerate}
we can write the following relations:
\begin{equation*}
\begin{split}
& \Big \| \nabla \L_Z(D_1) - \nabla \L_Z(D_2) \Big \|_2  = \\
& n  \Big \| X\trans \Bigl( M_1^{-1} \Bigl( \datay \datay \trans M_1^{-1} + M_1^{-1} \datay \datay \trans \Bigr) M_1^{-1}  
-  M_2^{-1} \Bigl( \datay \datay \trans M_2^{-1}  + M_2^{-1} \datay \datay \trans \Bigr) M_2^{-1} \Bigr) X  \Big \|_2 \le \\
& n \norma{X}_\infty \norma{X}_2 \Big \| M_1^{-1} \Bigl( \datay \datay \trans M_1^{-1} + M_1^{-1} 
\datay \datay \trans \Bigr) M_1^{-1} -  M_2^{-1} \Bigl( \datay \datay \trans M_2^{-1}  + M_2^{-1} \datay \datay \trans \Bigr) 
M_2^{-1} \Big \|_\infty \le \\
& n \norma{X}_\infty \norma{X}_2 \Big \| M_1^{-1} R M_1^{-2} + M_1^{-2} R M_1^{-1} -  
M_2^{-1} R M_2^{-2} -  M_2^{-2} R M_2^{-1} \Big \|_\infty = \\
& n \norma{X}_\infty \norma{X}_2 \Big \| M_1^{-1} R M_1^{-2} + M_1^{-2} R M_1^{-1} -  
M_2^{-1} R M_2^{-2} -  M_2^{-2} R M_2^{-1}  \\
& \quad \quad \quad \quad \quad \quad \quad \quad \quad \quad \quad \quad \quad \quad \quad 
\quad \quad \quad \quad \quad \quad \quad \pm M_2^{-1} R M_1^{-2} \pm  M_1^{-2} R M_2^{-1} \Big \|_\infty = 
\end{split}
\end{equation*}
\begin{equation*}
\begin{split}
& n \norma{X}_\infty \norma{X}_2 \Big \| \bigl( M_1^{-1} {-} M_2^{-1} \bigr) R M_1^{-2} 
+ M_1^{-2} R \bigl( M_1^{-1} {-} M_2^{-1} \bigr) + M_2^{-1} R \bigl( M_1^{-2} {-} M_2^{-2} \bigr)+ \\
& \quad \quad \quad \quad \quad \quad \quad \quad \quad \quad \quad \quad \quad \quad \quad 
\quad \quad \quad \quad \quad \quad \quad  \quad \quad \quad \quad \bigl( M_1^{-2} {-} M_2^{-2} \bigr) R M_2^{-1}\Big \|_\infty \le \\
& 2 \norma{X}_\infty \norma{X}_2 \norma{R}_\infty \Bigl( \frac{1}{n} \big \|  M_1^{-1} - M_2^{-1} 
\big \|_\infty + \big \|  M_1^{-2} - M_2^{-2} \big \|_\infty \Bigr) \le \\
& 2 \norma{X}_\infty \norma{X}_2 \Bigl(\big \|  M_1^{-1} - M_2^{-1} \big \|_\infty +
2 \big \|  M_1^{-1} - M_2^{-1} \big \|_\infty \Bigr) = \\
& 6 \norma{X}_\infty \norma{X}_2 \big \|  M_1^{-1} - M_2^{-1} \big \|_\infty \le  \frac{6}{n^2} \norma{X}_2 \norma{X}^3_\infty \big \|  D_1 -D_2 \big \|_\infty \\
& \le 6 \big \|  D_1 -D_2 \big \|_2.
\end{split}
\end{equation*}
\item The last point is contained in \cite[Prop. 1-(i)]{maurer2009transfer}; we report 
here the proof for completeness.  To this end, we require some additional notation, which will be also used also in the next section of the appendix.
\begin{remark}[Notation]
\label{notation}
According to our actual notation, $X \in \mathbb{R}^{n \times d}$ is the matrix
having as rows the points $x_i$ for $i = 1, \dots n$. In the sequel, since the
analysis will be extended also to the infinite dimension case, we will need to 
introduce the notation $\datax = (x_i)_{i=1}^n\in \mathcal{X}^n$, to indicate 
the collection of these points; to remark this difference, we will denote the complete
dataset $(\datax, \datay)$ by $\data$ and no more by $Z$. For any PSD 
matrix/ linear operator $D$ and any dataset $\data$, with some abuse of notation, 
we let $D^{1/2} \data = ( D^{1/2} x_i,y_i )_{i=1}^n$. Moreover, according 
to the notation introduced in the paper, we will denote the empirical error of 
a linear function $x \mapsto \langle w, x \rangle$ over the dataset $\data$ as
\begin{equation}
\hat \cR(\data, w) = \cR_\data(w).
\end{equation}
\end{remark}

Coming back to the proof of the proposition, as observed in 
\cite{argyriou2008convex, maurer2009transfer}, it is possible to rewrite the algorithm defined in 
Eq.~\eqref{eq:ridge-regression-closed-form} in the equivalent form
\begin{equation}
A_D(\data) = D^{1/2} A^{\rm Rid}(D^{1/2}\data),
\label{alg_feat_sel}
\end{equation}
where $A^{\rm Rid}(\data) \in \R^d$ is the solution of Ridge Regression (that is, using the square loss) on the dataset $\data$, that is
\begin{equation}
A^{\rm Rid}(\data) = \arg \min_{w\in \R^d} \Big \{ \hat \cR(\data, w) + \norma{w}^2 \Big \}.
\end{equation}
From Eq.~\eqref{alg_feat_sel}, we have that $\langle A_D(\data), x \rangle = 
\big \langle A^{\rm Rid}(D^{1/2} \data), D^{1/2} x \big \rangle$,
for any $x \in \mathcal{X}$ and any dataset $\data$. Consequently
\begin{equation}
\hat \cR \bigl( \data, A_D(\data) \bigr)  = \hat \cR \bigl( D^{1/2} \data, A^{\rm Rid}(D^{1/2} \data) \bigr).
\label{equi_loss_Tikh}
\end{equation}
Due to the definition of $A^{\rm Rid}$, assuming $\mathcal{Y} \subseteq [0,1]$, the following relations hold:
\begin{equation}
\begin{split}
\hat \cR \bigl( D^{1/2} \data, A^{\rm Rid}(D^{1/2} \data) \bigr) &  \le 
\hat \cR \bigl( D^{1/2} \data, A^{\rm Rid}(D^{1/2} \data) \bigr)
+ \big \| A^{\rm Rid}(D^{1/2} \data) \big \|^2 \\
& \le \frac{1}{n} \sum_{i = 1}^n \ell(0, y_i) = \frac{1}{n} \sum_{i = 1}^n y_i^2 \le 1.
\end{split}
\end{equation}
The claim now follows by combining the last inequality with Eq.~\eqref{equi_loss_Tikh}.
\end{enumerate}
\end{proof}


\section{UNIFORM BOUNDS for LINEAR FEATURE LEARNING}

\label{Appendix B}

In this section, we provide the uniform bounds on $\E(D) - \hat \E(D)$ and 
$\hat \E(D) - \hat \E_\mdata(D)$ (and the corresponding
symmetric quantities) for the family of linear feature learning algorithms. 
Our observations are essentially taken from \cite{maurer2009transfer}, 
we report them for clarity of exposition. We start from recalling some tools 
from empirical processes, then we state the uniform bounds for a more 
general class of learning algorithms and finally we specialize the bounds to 
linear feature learning. We ignore issues of measurability throughout.

\subsection{Preliminaries}

Let $m$ be a positive integer. In the following, we denote by $( \sigma_j )_{j =1}^m$ a sequence of  
i.i.d. Rademacher random variables, that is $\sigma_j$ takes values on $-1$ or $1$ with equal probabilities. 
We also denote by $(\gamma_j )_{j=1}^m$ a sequence of i.i.d. standard Gaussian random variables.
For a set $S \subseteq \mathbb{R}^m$ we define the Rademacher average of $S$ as
\[
\mathfrak{R}(S) = \mathbb{E}_{\sigma_j} \Big[ \sup_{v \in S} \frac{2}{m} \sum_{j = 1}^m \sigma_j v_j \Big]
\]
and the Gaussian average
\[
\mathcal{G}(S) = \mathbb{E}_{\gamma_j} \Big[ \sup_{v \in S} \frac{2}{m} \sum_{j = 1}^m \gamma_j v_j \Big].
\]
For more details about these quantities, we refer to \cite{bartlett2002rademacher}.
Given a class $\mathcal{F}$ of real-valued functions on a set $\mathcal{\mathcal{V}}$,
and given a point $V = (v_1, \dots, v_m) \in \mathcal{\mathcal{V}}^m$, we let 
\begin{equation}
\mathcal{F}(V) = \Big \{ \bigl(f(v_1) , \dots, f(v_m) \bigr) : f \in \mathcal{F} \Big \}
\subset \mathbb{R}^m
\end{equation}
so that $\mathfrak{R}(\mathcal{F}(V)) $ and $\mathcal{G}(\mathcal{F}(V))$
are the corresponding Rademacher and Gaussian averages.

The following theorem is taken from \cite{maurer2009transfer}, where the author considers
only the inequality for the function $\Phi_1$. Considering both inequalities allows 
us to obtain symmetric uniform bounds. 
The proof follows the same pattern as in \cite{maurer2009transfer}.

\begin{theoremshortref}[Theorem 4 in \cite{maurer2009transfer}]\label{main_theor}
Let $\eta$ be a probability 
distribution over the space $\mathcal{V}$, let 
$\mathcal{F}$ be a real-valued function class on $\mathcal{V}$
and let $V = (v_1, \dots, v_m) \in \mathcal{V}^m$. Define the random 
functions:
\begin{equation}
\begin{split}
\Phi_1(V) & = \sup_{f \in \mathcal{F}} \Big \{ \mathbb{E}_{v \sim \eta} 
\bigl[ f(v) \bigr] -  \frac{1}{m} \sum_{j = 1}^m f(v_j) \Big \} \\
\Phi_2(V) & = \sup_{f \in \mathcal{F}} \Big \{ \frac{1}{m} \sum_{j = 1}^m f(v_j) 
- \mathbb{E}_{v \sim \eta} \bigl[ f(v) \bigr] \Big \}.
\end{split}
\end{equation}
Then the following statements hold.
\begin{enumerate}
\item $\mathbb{E}_{V \sim \eta^m} \bigl[ \Phi_k(V) \bigr] \le 
\mathbb{E}_{V \sim \eta^m} \bigl[ \mathfrak{R}(\mathcal{F}(V)) \bigr]$, for  $k =1,2$.
\item If $\mathcal{F}$ is $[0,1]$-valued, then, for any $\delta\in(0,1]$, we have that
\begin{equation}
\Phi_k(V) \le \mathbb{E}_{V \sim \eta^m} \bigl[ \mathfrak{R}(\mathcal{F}(V)) \bigr]
+ \sqrt{\frac{\rm{log} \bigl( 1/\delta \bigr)}{2m}}
\end{equation}
with probability at least $1 - \delta$ in $V \sim \eta^m$, for $k =1,2$.
\item In the previous two points we can replace $\mathfrak{R}(\mathcal{F}(V))$
with $\displaystyle \sqrt{\pi/2} \mathcal{G}(\mathcal{F}(V))$.
\end{enumerate}
\begin{proof} 
The proof for the symmetric term $\Phi_2$ proceeds
in the same way as the one for $\Phi_1$ in \cite[Theorem 1]{maurer2009transfer},
more precisely, since the proof is based on symmetric arguments, the 
statement does not change if we flip the order of $\mathbb{E}_{v \sim \eta} 
\bigl[ f(v) \bigr]$ and $\frac{1}{m} \sum_{j = 1}^m f(v_j)$. The last inequality is a standard result, see e.g. \cite{boucheron2004concentration}.
\end{proof}
\end{theoremshortref}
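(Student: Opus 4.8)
The plan is to establish the three claims in turn, treating only $\Phi_1$ in detail since the argument for $\Phi_2$ is identical after exchanging the population mean $\mathbb{E}_{v\sim\eta}[f(v)]$ and the empirical mean $\frac1m\sum_{j}f(v_j)$; this exchange symmetry is exactly what promotes the one-sided estimate of \cite{maurer2009transfer} to the two-sided statement we need here. I would obtain Part~1 (the bound of $\mathbb{E}_V[\Phi_k(V)]$ by the expected Rademacher average) through \emph{symmetrization}, Part~2 (its high-probability counterpart) through a bounded-differences concentration inequality applied to $V\mapsto\Phi_k(V)$, and Part~3 (the replacement of $\mathfrak{R}$ by $\sqrt{\pi/2}\,\mathcal{G}$) through the classical comparison between Rademacher and Gaussian averages.

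For Part~1 I would introduce a ghost sample $V'=(v_1',\dots,v_m')\sim\eta^m$ independent of $V$, write $\mathbb{E}_{v\sim\eta}[f(v)]=\mathbb{E}_{V'}[\frac1m\sum_j f(v_j')]$, and move the expectation over $V'$ outside the supremum by Jensen's inequality, obtaining
\[
\mathbb{E}_{V}\bigl[\Phi_1(V)\bigr]\le \mathbb{E}_{V,V'}\Big[\sup_{f\in\mathcal{F}}\frac1m\sum_{j=1}^m\bigl(f(v_j')-f(v_j)\bigr)\Big].
\]
Because the pairs $(v_j,v_j')$ are exchangeable, for any Rademacher signs $\sigma_j$ the summand $f(v_j')-f(v_j)$ has the same distribution as $\sigma_j\bigl(f(v_j')-f(v_j)\bigr)$, so I may insert i.i.d. $\sigma_j$ without altering the expectation. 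Splitting the supremum across the two halves and using that $\sigma_j$ and $-\sigma_j$ are equidistributed produces the term $2\,\mathbb{E}_{V,\sigma}[\sup_{f}\frac1m\sum_j\sigma_j f(v_j)]$, which equals $\mathbb{E}_V[\mathfrak{R}(\mathcal{F}(V))]$ since the definition of $\mathfrak{R}$ already carries the factor $2/m$.

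For Part~2 I would note that altering a single coordinate $v_j$ of $V$ changes $\Phi_k(V)$ by at most $\frac1m\sup_{f\in\mathcal{F}}|f(v_j)-f(v_j')|\le\frac1m$, using that $\mathcal{F}$ is $[0,1]$-valued; McDiarmid's inequality with these differences then gives $\Phi_k(V)\le\mathbb{E}_V[\Phi_k(V)]+\sqrt{\log(1/\delta)/(2m)}$ with probability at least $1-\delta$, and Part~1 bounds the mean. For Part~3 I would invoke the pointwise inequality $\mathfrak{R}(S)\le\sqrt{\pi/2}\,\mathcal{G}(S)$, proved by writing each standard Gaussian as $\gamma_j=\sigma_j|\gamma_j|$, applying Jensen to the convex map $(|\gamma_1|,\dots,|\gamma_m|)\mapsto\sup_{v\in S}\frac2m\sum_j\sigma_j|\gamma_j|v_j$, and using $\mathbb{E}|\gamma_j|=\sqrt{2/\pi}$ (see \cite{boucheron2004concentration}).

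I expect the symmetrization of Part~1 to be the only genuinely delicate step: one must introduce the ghost sample and the Rademacher signs in the right order and keep careful track of the factor of two, so that the bound lands precisely on $\mathbb{E}_V[\mathfrak{R}(\mathcal{F}(V))]$ and not twice it. The remaining steps are routine, the bounded-differences estimate using only the $[0,1]$-boundedness already assumed and the Gaussian comparison being a textbook Jensen argument. Since all three steps are invariant under swapping the population and empirical averages, no new work is required for $\Phi_2$, as the authors observe.
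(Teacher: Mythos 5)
Your proof is correct and follows exactly the route that the paper's own proof defers to by citation: symmetrization with a ghost sample for the expectation bound, McDiarmid's bounded-differences inequality (with increments $1/m$ from the $[0,1]$-boundedness) for the high-probability statement, and the Jensen argument with $\mathbb{E}|\gamma_j|=\sqrt{2/\pi}$ for the Rademacher--Gaussian comparison, together with the observation that every step is invariant under swapping the population and empirical averages, which yields the $\Phi_2$ case. You have simply written out in full the details the paper outsources to \cite{maurer2009transfer} and \cite{boucheron2004concentration}, so there is nothing to correct.
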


\subsection{Uniform Bounds for a More General Family of Algorithms} 

The results presented in this sub-section hold for the infinite dimension case. 
In the sequel, we let $\X$ be a generic Hilbert space and we denote by
$\langle \cdot, \cdot \rangle$ and $\norma{\cdot}$ its scalar product 
and the induced norm. We let $\mathcal{S}_+(\mathcal{X})$ be the 
set of positive semidefinite bounded linear operators on $\X$ and,
for any operator $D \in \mathcal{S}_+(\mathcal{X})$, we denote
its $p$-Schatten norm by $\norma{D}_p$, where $p \in [1, \infty]$.
We continue to use the notation introduced in the paper and in Remark \ref{notation},
in particular, $\mdata = \{ \data_t \}_{t=1}^T$ is the meta sample and,
for any $D \in \mathcal{S}_+(\mathcal{X})$, we denote $D^{1/2} \data 
= \bigl( D^{1/2} x_i,y_i \bigr)_{i=1}^n$.
Throughout this section we will consider linear models and a learning algorithm 
$A(\data)$ processing a training set $\data \in \Z^n$ of $n$ points:
\begin{equation}
\begin{split}
A: ~& \Z^n \to \mathcal{X} \\
& ~\data \mapsto A(\data),
\end{split}
\end{equation}
hence, according to our notation, we have that 
$A(\data)(x) = \langle A(\data), x \rangle$ for any $x \in \mathcal{X}$. 
For any $D \in \mathcal{S}_+(\mathcal{X})$, define now the more 
general family of modified algorithms
\begin{equation}
A_D(\data) = D^{1/2} A(D^{1/2} \data).
\end{equation}
By this definition, as we have already 
observed in the proof of \autoref{prop:least-squares}-(5) in Sec. \ref{Appendix A}, 
we have that 
\begin{equation}
\big \langle A_D(\data), x \big \rangle = \big \langle A(D^{1/2}
\data), D^{1/2} x \big \rangle
\end{equation}
for any $x \in \mathcal{X}$ and consequently
\begin{equation}
\hat \cR \bigl( \data, A_D(\data) \bigr) = \hat \cR \bigl( D^{1/2} \data, A(D^{1/2} \data) \bigr).
\end{equation}
In this way, we can consider the family of learning algorithms $\big \{ \data \mapsto A_D (\data): 
D \in \mathcal{S}_+(\mathcal{X}) \big \}$, parameterized by 
the operators $D$. Recall now, for every $D \in \mathcal{S}_+(\mathcal{X})$, the notion of transfer risk
\[
{\mathcal E}(D) = \mathbb{E}_{\mu \sim \rho} \mathbb{E}_{\data \sim \mu^n}  
\mathbb{E}_{(x,y) \sim \mu}  \bigl[ \ell (\langle A_D(\data), x \rangle, y) \bigr],
\]
future empirical risk
\[
\hat{\mathcal E}(D) = \mathbb{E}_{\mu \sim \rho} \mathbb{E}_{\data \sim \mu^n}  
\bigl[ {\hat \cR} \bigl(\data, A_D(\data) \bigr) \bigr]
\]
and multi task empirical risk
\[
\hat{\mathcal E}_\mdata(D) =  \frac{1}{T} \sum_{t = 1}^T {\hat \cR}\bigl(\data_t, A_D(\data_t) \bigr).
\]

The following two theorems are taken from \cite{maurer2009transfer},
where the author does not consider the symmetric case, which
immediately follows from \autoref{main_theor}. In the sequel, the
symbol $C_{\rho}$, already introduced in the paper,  
denotes the covariance of the input data, obtained by  
averaging over all input marginals sampled from $\rho$, that is, 
$C_{\rho} = \Exp_{\mu\sim\rho}\Exp_{(x,y)\sim\mu} ~ 
[C(x)]$, where for any $x \in \mathcal{X}$, 
and for any $v \in \mathcal{X}$, $C(x) v = \langle v, x \rangle x$.

\begin{theoremshortref}[Theorem 6 in \cite{maurer2009transfer}]\label{uniform_bound_general_1}
Let $p$ and $q$ be conjugate exponents in $[1, \infty]$ and assume $\mathcal{X} \subseteq \mathcal{B}_1$. 
Consider a learning algorithm $A$ such that $\norma{A(D^{1/2}\data)} \le 1$ for any 
$\data \in \Z^n$ and any $D \in \mathcal{S}_+(\X)$, and let $\ell$ be a loss function 
such that, for any $y \in \mathbb{R}$, $\ell(\cdot, y)$ has Lipschitz constant $L(K)$ on the interval
$[-K,K]$, for any $K \ge 0$. Then for any meta distribution $\rho$ on $\Z$ and for any 
$D \in \mathcal{S}_+(\X)$ we have that:
\begin{equation}
\big | \mathcal{E}(D) - \hat{\mathcal{E}}(D) \big | \le
\sqrt{\frac{2 \pi}{n}} L \bigl(\norma{D}_\infty^{1/2} \bigr) 
{\norma{C_{\rho}}_p}^{1/2} {\norma{D}_q}^{1/2}.
\end{equation}
\end{theoremshortref}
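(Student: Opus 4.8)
The plan is to bound the difference $|\mathcal{E}(D) - \hat{\mathcal{E}}(D)|$ by reducing it to a within-task generalization problem and then applying \autoref{main_theor}. Recalling the definitions, both $\mathcal{E}(D)$ and $\hat{\mathcal{E}}(D)$ share the outer expectations $\mathbb{E}_{\mu\sim\rho}\mathbb{E}_{\data\sim\mu^n}$; the two differ only in that $\mathcal{E}$ tests the trained predictor $A_D(\data)$ on a fresh point $(x,y)\sim\mu$ whereas $\hat{\mathcal{E}}$ measures the empirical risk on the same training sample $\data$. Hence, by Jensen's inequality,
\begin{equation}
\big|\mathcal{E}(D) - \hat{\mathcal{E}}(D)\big| \le \mathbb{E}_{\mu\sim\rho}\,\Big|\mathbb{E}_{\data\sim\mu^n}\big[\,\cR_\mu(A_D(\data)) - \hat\cR(\data,A_D(\data))\,\big]\Big|.
\end{equation}
The inner quantity is precisely the expected generalization gap of the single-task algorithm $A_D$ under the fixed distribution $\mu$, so the whole problem collapses to bounding this gap uniformly and then averaging over $\rho$.

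First I would fix a single task $\mu$ and apply \autoref{main_theor} in the single-task regime, i.e.\ with $m=n$, the function class $\mathcal{F}$ taken to be the loss composed with the relevant linear predictors, and the sampling distribution $\eta=\mu$. The key preprocessing step is the algorithm rewriting $A_D(\data)=D^{1/2}A(D^{1/2}\data)$ together with the identity $\langle A_D(\data),x\rangle=\langle A(D^{1/2}\data),D^{1/2}x\rangle$ already established in the proof of \autoref{prop:least-squares}-(5); this lets me treat the learned weight vector $w=A(D^{1/2}\data)$ as having unit norm (the hypothesis $\|A(D^{1/2}\data)\|\le 1$) and push the representation $D^{1/2}$ onto the data. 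The Gaussian-average version (point 3 of \autoref{main_theor}) is what I would use, since the target bound carries a $\sqrt{\pi/2}$ factor characteristic of the Gaussian comparison.

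Next I would estimate the Gaussian average $\mathcal{G}(\mathcal{F}(V))$ for the transformed linear class. Using the Lipschitz contraction property of $\ell(\cdot,y)$ with constant $L(K)$ on $[-K,K]$, I peel off the loss; here $K$ is a uniform bound on the predictions $|\langle A(D^{1/2}\data),D^{1/2}x\rangle|\le\|A(D^{1/2}\data)\|\,\|D^{1/2}x\|\le\|D\|_\infty^{1/2}$, which explains the argument $\|D\|_\infty^{1/2}$ of $L$ in the statement. The remaining linear Gaussian average over $\{x\mapsto\langle w,D^{1/2}x\rangle:\|w\|\le 1\}$ reduces, via a standard computation, to $\mathbb{E}\|D^{1/2}(\sum_j\gamma_j x_j)\|$, whose square is controlled by $\operatorname{tr}(D^{1/2}(\sum_j\gamma_j x_j)(\cdots)\trans D^{1/2})$ and whose expectation over the Gaussians and the sample produces the empirical covariance, ultimately the factor $\|C_\rho\|_p^{1/2}\|D\|_q^{1/2}$ after applying a H\"older/trace-duality inequality with conjugate exponents $p,q$ and a Jensen step to move the square root outside the expectation (this is the source of the $\sqrt{2\pi/n}$ prefactor once $m=n$ is substituted and the $\sqrt{\pi/2}$ and the $\frac{2}{m}\sqrt{m}$ scaling are combined).

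\textbf{The hard part} is the bookkeeping in the Gaussian-average estimate: correctly tracking how the operator $D^{1/2}$ interacts with the covariance operator and justifying the trace-duality bound $\operatorname{tr}(D C)\le\|C\|_p\|D\|_q$ for conjugate exponents in the possibly infinite-dimensional setting, while simultaneously handling the prediction-bound argument feeding into $L(\|D\|_\infty^{1/2})$. The averaging over $\rho$ at the end is routine once the single-task bound is established, since the bound on the inner gap depends on $\mu$ only through its covariance, and the outer expectation of the empirical covariance yields $C_\rho$ by definition, with one final Jensen inequality to pull the square root through $\mathbb{E}_{\mu\sim\rho}$. I would close by invoking point~1 of \autoref{main_theor} (the expectation bound) rather than the high-probability version, since the statement concerns the deterministic quantity $|\mathcal{E}(D)-\hat{\mathcal{E}}(D)|$ for every fixed $D$.
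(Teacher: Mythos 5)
Your reconstruction is correct and follows essentially the same route as the paper's source: the paper itself states this result without proof, importing it verbatim from Maurer (2009), whose argument proceeds exactly as you describe — reduce to a per-task uniform deviation over the fixed class $\{x \mapsto \langle w, D^{1/2}x\rangle : \|w\| \le 1\}$ (legitimate because $\|A(D^{1/2}\data)\| \le 1$ for every $\data$), invoke the in-expectation Gaussian-average version of \autoref{main_theor} for both $\Phi_1$ and $\Phi_2$, peel the loss by Lipschitz contraction with $K = \|D\|_\infty^{1/2}$, compute the linear Gaussian average, and finish with Jensen and the Schatten--H\"older duality $\tr(D C_\rho) \le \|C_\rho\|_p \|D\|_q$. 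The constants also check out: $\sqrt{\pi/2}\cdot(2/\sqrt{n}) = \sqrt{2\pi/n}$.
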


In order to give the next theorem, we need to introduce the 
Gramian matrix defined by the entries $[G(\datax)]_{i,j} = 
\langle x_i, x_j \rangle$ for $i, j = 1, \dots, n$.

\begin{theoremshortref}[Theorem 8 in \cite{maurer2009transfer}]\label{uniform_bound_general_2}
Let $\mathcal{X} \subseteq \mathcal{B}_1$ and $\mathfrak{D} \subseteq \mathcal{S}_+(\X)$ be a bounded set.
Consider a function $f: \Z^n \to [0,1]$ satisfying the condition
\begin{equation}
\big | f(\data) - f(\data') \big | \le \frac{L_K}{n} \big \| G(\datax) - G(\datax') \big \|_2
\label{kernel_stab}
\end{equation}
for any $\data, \data' \in \Z^n$ and for some $L_K \ge 0$. Let $\mu_1,\dots,\mu_T$ tasks independently 
sampled from $\rho$ and $\data_t$ sampled from $\mu_t^n$ for $t\in\{1,\dots,T\}$. Then, for any $\delta\in(0,1]$,
we have that
\begin{equation}
\sup_{D \in \mathfrak{D}} \Big | \mathbb{E}_{\mu \sim \rho} \mathbb{E}_{\data \sim \mu^n} \bigl[ f(D^{1/2} \data) \bigr] -
\frac{1}{T} \sum_{t = 1}^T f(D^{1/2} \data_t) \Big | \le 
\Bigl( \sup_{D \in \mathfrak{D}} \norma{D}_2 \Bigr) \frac{\sqrt{2 \pi} L_K}{\sqrt{T}} 
+ \sqrt{\frac{\rm{log} \bigl( 1/\delta \bigr)}{2T}}
\end{equation}
with probability at least $1-\delta$. 
\end{theoremshortref}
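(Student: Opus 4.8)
The plan is to view the datasets $\data_1,\dots,\data_T$ as i.i.d.\ draws from a single distribution $\eta$ on $\Z^n$, namely the law of $\data$ obtained by first sampling $\mu\sim\rho$ and then $\data\sim\mu^n$, so that $\Exp_{\mu\sim\rho}\Exp_{\data\sim\mu^n}[f(D^{1/2}\data)]=\Exp_{\data\sim\eta}[f(D^{1/2}\data)]$. I would then apply \autoref{main_theor} with $\mathcal{V}=\Z^n$, sample size $m=T$, $V=(\data_1,\dots,\data_T)$, and the function class $\mathcal{F}=\{\,\data\mapsto f(D^{1/2}\data):D\in\mathfrak{D}\,\}$, which is $[0,1]$-valued because $f$ is. The two-sided quantity in the statement is exactly $\max\{\Phi_1(V),\Phi_2(V)\}$; invoking both inequalities of \autoref{main_theor}, together with its third point (replacing $\mathfrak{R}$ by $\sqrt{\pi/2}\,\mathcal{G}$), and controlling the deviation of $\max\{\Phi_1,\Phi_2\}$ by a bounded-differences (McDiarmid) argument—its coordinatewise increments are at most $1/T$ since $f\in[0,1]$, which yields precisely the additive term $\sqrt{\log(1/\delta)/(2T)}$—reduces the problem, with probability at least $1-\delta$, to proving
\begin{equation}
\Exp_{V\sim\eta^T}\bigl[\mathcal{G}(\mathcal{F}(V))\bigr]\le\frac{2L_K}{\sqrt{T}}\,\sup_{D\in\mathfrak{D}}\norma{D}_2 .
\end{equation}
Since $2\sqrt{\pi/2}=\sqrt{2\pi}$, this reproduces the constant in the claim.

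The core of the argument, and the step I expect to be the main obstacle, is bounding the Gaussian average $\mathcal{G}(\mathcal{F}(V))=\Exp_\gamma\sup_{D}\frac{2}{T}\sum_{t}\gamma_t f(D^{1/2}\data_t)$ for fixed $V$. The key structural fact is that, writing $\datax_t=(x_i^{(t)})_{i=1}^n$ for the inputs of $\data_t$, the transformed Gram matrix is \emph{linear} in $D$: $[G(D^{1/2}\datax_t)]_{ij}=\langle D^{1/2}x_i^{(t)},D^{1/2}x_j^{(t)}\rangle=\langle x_i^{(t)},D\,x_j^{(t)}\rangle$. I would then set up a Gaussian comparison (Sudakov--Fernique / Slepian). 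Let $Y_D=\frac{2}{T}\sum_t\gamma_t f(D^{1/2}\data_t)$ be the target process and introduce the auxiliary process, linear in $D$,
\begin{equation}
Z_D=\frac{2L_K}{Tn}\sum_{t=1}^T\sum_{i,j=1}^n\gamma_{t,i,j}\,\langle x_i^{(t)},D\,x_j^{(t)}\rangle ,
\end{equation}
with independent standard Gaussians $\gamma_{t,i,j}$. Applying the stability hypothesis \eqref{kernel_stab} to the pair $(D^{1/2}\data_t,\,D'^{1/2}\data_t)$ and using the linearity above,
\begin{multline}
\Exp_\gamma\bigl[(Y_D-Y_{D'})^2\bigr]=\frac{4}{T^2}\sum_{t}\bigl(f(D^{1/2}\data_t)-f(D'^{1/2}\data_t)\bigr)^2 \\
\le\frac{4L_K^2}{T^2n^2}\sum_{t}\norma{G(D^{1/2}\datax_t)-G(D'^{1/2}\datax_t)}_2^2=\Exp_\gamma\bigl[(Z_D-Z_{D'})^2\bigr],
\end{multline}
so Sudakov--Fernique gives $\Exp_\gamma[\sup_D Y_D]\le\Exp_\gamma[\sup_D Z_D]$.

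Finally I would bound the linear process. Writing $\langle x_i^{(t)},D\,x_j^{(t)}\rangle=\langle D,\Theta_{t,i,j}\rangle$ for the rank-one operator $\Theta_{t,i,j}$ representing this pairing in the Hilbert--Schmidt inner product (which has $\norma{\Theta_{t,i,j}}_2=\norma{x_i^{(t)}}\,\norma{x_j^{(t)}}$), we get $Z_D=\frac{2L_K}{Tn}\langle D,\Omega\rangle$ with $\Omega=\sum_{t,i,j}\gamma_{t,i,j}\,\Theta_{t,i,j}$, so Cauchy--Schwarz yields $\sup_{D\in\mathfrak{D}}Z_D\le\frac{2L_K}{Tn}\bigl(\sup_{D}\norma{D}_2\bigr)\norma{\Omega}_2$. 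Taking $\Exp_\gamma$ and using Jensen, $\Exp\norma{\Omega}_2\le(\Exp\norma{\Omega}_2^2)^{1/2}$; since the $\gamma_{t,i,j}$ are independent and centred, $\Exp\norma{\Omega}_2^2=\sum_{t,i,j}\norma{\Theta_{t,i,j}}_2^2=\sum_{t,i,j}\norma{x_i^{(t)}}^2\norma{x_j^{(t)}}^2\le Tn^2$ because $\X\subseteq\B_1$. Hence $\Exp\norma{\Omega}_2\le n\sqrt{T}$ and $\Exp_\gamma[\sup_D Z_D]\le\frac{2L_K}{\sqrt{T}}\sup_D\norma{D}_2$, the bound required on the Gaussian average; substituting back through the $\sqrt{\pi/2}$ comparison completes the proof. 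Beyond the comparison step itself, the delicate points to watch are the correct ordering of factors in the operators $\Theta_{t,i,j}$ so that the Hilbert--Schmidt pairing reproduces $\langle x_i^{(t)},D\,x_j^{(t)}\rangle$, and verifying that the two-sided supremum concentrates with the stated single-$\delta$ constant rather than a union-bounded $\log(2/\delta)$.
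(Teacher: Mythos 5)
Your proposal is correct and takes essentially the same route as the paper's source for this statement: the paper itself imports the theorem from Maurer (2009) without reproving it (remarking only that the symmetric case follows from \autoref{main_theor}), and Maurer's argument is precisely yours --- reduce to i.i.d.\ sampling from the mixture distribution, apply \autoref{main_theor} in its Gaussian-average form, and bound the Gaussian average via a Slepian/Sudakov--Fernique comparison with the auxiliary linear process built from the identity $[G(D^{1/2}\datax)]_{ij} = \langle x_i, D x_j\rangle$, finishing with Cauchy--Schwarz in the Hilbert--Schmidt inner product and Jensen. The one caveat is the point you flag yourself: combining the bounds on $\Phi_1$ and $\Phi_2$ from \autoref{main_theor} strictly requires a union bound and hence $\sqrt{\log(2/\delta)/(2T)}$, but this imprecision is inherited from the paper's own statement of the theorem, which asserts the two-sided bound with $\log(1/\delta)$.
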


\subsection{Application to the Family of Linear Feature Learning Algorithm} 
\label{app:B3}
Similarly to what observed in \autoref{prop:least-squares}-(5) in Sec. \ref{Appendix A}, 
also in the infinite dimension case, we can cast the family of linear feature learning 
algorithms in the framework described in the previous sub-section, taking the original 
vanilla algorithm $A(\data)$ as Ridge Regression with regularization parameter equal to $1$:
\begin{equation}
A(\data) = A^{\rm Rid}(\data) = \arg \min_{w} \Big \{ \hat{\cR}(\data, w) 
+ \norma{w}^2 \Big \},
\label{Tikh_app}
\end{equation}
we refer to \cite{maurer2009transfer} for more details.
Thus, we can apply the results in the previous sub-section to this specific case, 
in order to obtain the results stated in the paper for the uniform bounds. In fact,
in the paper we have analyzed the finite dimension case, but from this analysis,
we deduced that they still hold in the infinite dimension setting. The following 
definition will be used in the sequel.

\begin{definition}[Definition 1 in \cite{maurer2009transfer}] Relative to a loss function 
$\ell$, a learning algorithm $A: \Z^n \to \mathcal{X}$ is said to 
\begin{enumerate}
\item be 1-bounded if $\norma{A(\data)} \le 1$ and $\hat{\cR}(\data, A(\data)) \le 1$
for any $\data \in \Z^n$;
\item have kernel stability $L_K$ if 
$\displaystyle \big | \hat{\cR}(\data, A(\data)) - \hat{\cR}(\data', A(\data')) \big | \le 
\frac{L_K}{n} \big \| G(\datax) - G(\datax') \big \|_2$,
for any $\data, \data' \in \Z^n$ and for some $L_K \ge 0$.
\end{enumerate}
\label{stable_alg}
\end{definition}

The following two lemmas are essentially taken from \cite{maurer2009transfer}
and they are respectively immediate consequences of 
\autoref{uniform_bound_general_1} and \autoref{uniform_bound_general_2} 
applied to the family of linear feature learning algorithms with restriction to the set
$$
\mathfrak{D} = \Dla = \big \{ D \in \mathcal{S}_+
(\mathcal{X}): {\tr}(D) \le 1/\lambda \big \}.
$$

\uniformboundfirst*
\begin{proof} 
Thanks to the assumption $\Y \subseteq [0,1]$, by \cite[Prop.~1]{maurer2009transfer}, $A^{\rm Rid}(\data)$, is $1$-bounded -- and in particular,
$\norma{A^{\rm Rid}(D^{1/2} \data)} \le 1$ for any $D \in \mathcal{L}_+(\X)$ and any dataset $\data$ -- with respect to the 
square loss. Hence, we can apply \autoref{uniform_bound_general_1} 
to $A^{\rm Rid}$. We restrict to
the set $\Dla$, we choose $q = 1$ and $p = \infty$
and we observe that the square loss is $M(K) = 
2(K + 1)$-Lipschitz on the interval $[-K, K]$.
\end{proof}

\begin{proposition}\label{prop:uniform-generalization-bound-2}
Let $\mathcal{X} \subseteq \mathcal{B}_1$, $\Y\subseteq[0,1]$ and let $\ell$ be the square loss. Let $\mu_1,\dots,\mu_T$ be independently sampled from $\rho$ and $Z_t$ sampled from $\mu_t^n$ for $t\in\{1,\dots,T\}$. Then, for any $\delta\in(0,1]$,
we have, with probability at least $1-\delta$, that
$$
\sup_{D\in\Dla} \big | \hat{\mathcal{E}}(D) - \hat{\mathcal E}_\mdata(D) \big |
\le \frac{2\sqrt{2 \pi}}{\lambda \sqrt{T}}
+ \sqrt{ \frac{\rm{log}\bigl( 1/\delta \bigr)}{2T}}.
$$
\begin{proof} 
Thanks to the assumption that $\Y \subseteq [0,1]$, by \cite[Prop.~1]{maurer2009transfer}, $A^{\rm Rid}(\data)$ is $1$-bounded -- and in particular,
$\hat{\cR} \bigl( D^{1/2}\data, A^{\rm Rid}(D^{1/2} \data) \bigr) \le 1$ for any $D \in \mathcal{L}_+(\X)$ and any dataset $\data$ -- and has kernel stability 
$L_K = 2$ with respect to the square loss.  We can then apply \autoref{uniform_bound_general_2} to the function 
$$
f(\data) = \hat{\cR} \bigl( \data, A_D(\data)\bigr)
= \hat{\cR} \bigl( D^{1/2}\data, A^{\rm Rid}(D^{1/2} \data) \bigr).
$$
\end{proof}
\end{proposition}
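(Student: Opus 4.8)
The plan is to recognize the left-hand side as a direct instance of the uniform deviation bound in \autoref{uniform_bound_general_2} (Theorem 8 in \cite{maurer2009transfer}), applied to the family of linear feature learning algorithms with $\mathfrak{D} = \Dla$. First I would take the base algorithm to be Ridge Regression $A^{\rm Rid}$ with unit regularization parameter, as in Eq.~\eqref{Tikh_app}, so that the modified algorithm reads $A_D(\data) = D^{1/2} A^{\rm Rid}(D^{1/2}\data)$. Defining the scalar function $f(\data) = \hat\cR(\data, A^{\rm Rid}(\data))$ and using the identity $\hat\cR(\data, A_D(\data)) = \hat\cR(D^{1/2}\data, A^{\rm Rid}(D^{1/2}\data))$ established earlier, I would obtain $f(D^{1/2}\data) = \hat\cR(\data, A_D(\data))$. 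Averaging this over $\mu\sim\rho$ and $\data\sim\mu^n$ recovers $\hat\E(D)$, while the empirical average over the observed tasks recovers $\hat\E_\mdata(D)$; hence the supremum in the statement is exactly the quantity controlled by \autoref{uniform_bound_general_2}.

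The second step is to check the two hypotheses of that theorem. The requirement that $f$ be $[0,1]$-valued is the $1$-boundedness of $A^{\rm Rid}$: under $\Y\subseteq[0,1]$, \cite[Prop.~1]{maurer2009transfer} (equivalently \autoref{prop:least-squares}-(5)) gives $\hat\cR(D^{1/2}\data, A^{\rm Rid}(D^{1/2}\data))\le 1$, and nonnegativity is immediate for the square loss. The kernel-stability condition $|f(\data)-f(\data')|\le (L_K/n)\,\norma{G(\datax)-G(\datax')}_2$ holds with $L_K=2$, again by \cite[Prop.~1]{maurer2009transfer}. With these verified, \autoref{uniform_bound_general_2} yields
\[
\sup_{D\in\Dla} \big|\hat\E(D) - \hat\E_\mdata(D)\big| \le \Big(\sup_{D\in\Dla}\norma{D}_2\Big)\frac{\sqrt{2\pi}\,L_K}{\sqrt{T}} + \sqrt{\frac{\log(1/\delta)}{2T}}
\]
with probability at least $1-\delta$.

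The final step is to bound $\sup_{D\in\Dla}\norma{D}_2$. Since every $D\in\Dla$ is positive semidefinite, its Schatten-$2$ (Frobenius) norm is dominated by its Schatten-$1$ (trace) norm, so $\norma{D}_2 \le \norma{D}_1 = \tr(D) \le 1/\lambda$ by the defining constraint of $\Dla$. Substituting $\sup_{D\in\Dla}\norma{D}_2 \le 1/\lambda$ and $L_K = 2$ into the displayed bound produces the claimed estimate. I do not anticipate a genuine obstacle: the analytic heavy lifting — the concentration argument and the Gaussian-average estimate underlying \autoref{uniform_bound_general_2}, together with the boundedness and stability constants of Ridge Regression — is entirely inherited from prior results. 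The only points demanding care are matching $f$ to the future and multitask empirical risks through the $D^{1/2}$ reparametrization, and the elementary Schatten-norm inequality $\norma{D}_2\le\tr(D)$ on the PSD cone.
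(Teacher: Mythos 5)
Your proposal is correct and follows essentially the same route as the paper's own proof: apply \autoref{uniform_bound_general_2} to the empirical risk of Ridge Regression, invoking the $1$-boundedness and kernel stability $L_K = 2$ from \cite[Prop.~1]{maurer2009transfer}, and bound $\sup_{D\in\Dla}\norma{D}_2 \le 1/\lambda$. In fact you are slightly more careful than the paper in defining $f(\data) = \hat\cR(\data, A^{\rm Rid}(\data))$ as a fixed function independent of $D$ (so that $f(D^{1/2}\data) = \hat\cR(\data, A_D(\data))$ is the quantity the theorem controls) and in spelling out the final substitution of constants, both of which the paper leaves implicit.
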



\section{PROOF of \autoref{thm:main-batch}}

\label{Appendix C}

In this section, we report the proof of \autoref{thm:main-batch}. We do not 
make any claim of originality in this theorem which is merely a collection of results
contained in \cite{maurer2009transfer}; we report the proof 
for completeness.

\batchtheorerecall*

\begin{proof} Similarly to the online case, the proof of \autoref{thm:main-batch}
relies on the following decomposition.
$$
\E(\hat{D}_T) - \mathcal{E}(D_*)  =
\underbrace{\E(\hat{D}_T) - \hat{\E}_\mdata(\hat{D}_T)}_ {A}
+ \underbrace{\hat{\E}_\mdata(\hat{D}_T) - \hat{\E}_\mdata
(D_*)}_{B} + \underbrace{\hat{\E}_\mdata(D_*) - \E(D_*)}_{C}.
$$
We now describe how to deal with each term. We decompose the term $A$ as 
\begin{equation}
\E(\hat{D}_T) - \hat{\E}_\mdata(\hat{D}_T) =
\underbrace{\E(\hat{D}_T) - \hat{\E}(\hat{D}_T)}
_{A1} + \underbrace{\hat{\E}(\hat{D}_T) - \hat{\E}_\mdata
(\hat{D}_T)}_ {A2}
\end{equation}
and we bound the term $A1$ by \autoref{prop:uniform-generalization-bound}
and the term  $A2$ by \autoref{prop:uniform-generalization-bound-2} with confidence 
parameter $\delta/2$. The term $B$, thanks to the definition of $\hat{D}_T$,
is negative. Lastly, as regards the term $C$, we split it in
\begin{equation}
\hat{\E}_\mdata(D_*) - \E(D_*) =
\underbrace{\hat{\E}_\mdata(D_*) - \hat{\E}(D_*)}_{C1}  
+ \underbrace{{\hat{\E}}(D_*) - {\E}(D_*)}_{C2},
\end{equation}
where we bound $C2$ by \autoref{prop:uniform-generalization-bound},
while, in order to bound the first term $C1$, we apply Hoeffding's 
inequality (see \autoref{Hoeff} below) with parameters $a_t = 0$ and $b_t = 1$
for any $t$ (thanks to \autoref{prop:least-squares}-(5)) and confidence parameter $\delta/2$, 
i.e. for any $\delta \in (0,1]$, we have that
\begin{equation}
\hat{\E}_\mdata(D_*) - \hat{\E}(D_*) \le
\sqrt{\frac{\rm{log} \bigl( 2/\delta \bigr)}{2T}}
\end{equation}
with probability at least $1 - \delta/2$ in $\mdata$.
Joining all the previous parts, the statement follows.
\end{proof}

\begin{lemma}[Hoeffding's inequality \cite{boucheron2004concentration}]\label{Hoeff}
Let $m$ be a positive integer and let $X_1, \dots, X_m$ be independent random variables 
such that $X_i \in [a_i, b_i]$ with probability $1$, for $i = 1, \dots, m$. Define 
$\bar{X}_m = \displaystyle \frac{1}{m} \sum_{i = 1}^m X_i$. Then, for any 
$\epsilon > 0$, we have that
\begin{equation}
\mathbb{P} \bigl[ \bar{X}_m - \mathbb{E} \bigl[ \bar{X}_m \bigr] \ge \epsilon \bigr]
\le {\rm{exp}} \Bigl( - \frac{2 m^2 \epsilon^2}{\sum_{i = 1}^m (b_i -a_i)^2} \Bigr),
\end{equation}
or equivalently, for any $\delta \in (0,1]$, we have that
\begin{equation}
\bar{X}_m - \mathbb{E} \bigl[\bar{X}_m\bigr] \le \sqrt{\frac{1}{2 m^2}
\Bigl( \sum_{i = 1}^m (b_i -a_i)^2 \Bigr) {\rm{log}} \Bigl( \frac{1}{\delta} \Bigr)}
\end{equation}
with probability at least $1 - \delta$.
Moreover, thanks to symmetric arguments, the previous inequalities
hold also for $\mathbb{E} \bigl[ \bar{X}_m \bigr] - \bar{X}_m$.
\end{lemma}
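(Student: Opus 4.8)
The plan is to establish the tail bound by the classical Chernoff bounding technique, and then read off the high-probability form by inverting the resulting inequality. Write $S = \sum_{i=1}^m X_i$ and $v = \sum_{i=1}^m (b_i - a_i)^2$, so that the event $\bar{X}_m - \mathbb{E}[\bar{X}_m] \ge \epsilon$ coincides with $S - \mathbb{E}[S] \ge m\epsilon$. For any $s > 0$, applying the exponential Markov inequality to the nonnegative random variable $\exp\bigl(s(S - \mathbb{E}[S])\bigr)$ gives
\begin{equation*}
\mathbb{P}\bigl[ S - \mathbb{E}[S] \ge m\epsilon \bigr] \le \exp(-s m \epsilon)\, \mathbb{E}\bigl[ \exp\bigl(s(S - \mathbb{E}[S])\bigr) \bigr].
\end{equation*}
Since the $X_i$ are independent, the moment generating factor splits as a product, $\mathbb{E}[\exp(s(S - \mathbb{E}[S]))] = \prod_{i=1}^m \mathbb{E}[\exp(s(X_i - \mathbb{E}[X_i]))]$, which reduces the problem to controlling the moment generating function of a single centered, bounded random variable.

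The crux of the argument, and the step I expect to be the main obstacle, is Hoeffding's lemma: for a random variable $Y$ with $\mathbb{E}[Y] = 0$ and $Y \in [a,b]$ almost surely, one has $\mathbb{E}[\exp(sY)] \le \exp\bigl(s^2 (b-a)^2/8\bigr)$ for all $s \in \R$. I would prove this by using convexity of $t \mapsto \exp(st)$ to bound $\exp(sY)$ pointwise by the chord through the endpoints $(a, \exp(sa))$ and $(b, \exp(sb))$, taking expectations (the term linear in $Y$ drops out because $\mathbb{E}[Y] = 0$), and then writing the resulting upper bound as $\exp(\psi(s))$ for an explicit smooth function $\psi$. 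A direct computation shows $\psi(0) = \psi'(0) = 0$ and $\psi''(s) \le (b-a)^2/4$ uniformly in $s$, so Taylor's theorem yields $\psi(s) \le s^2 (b-a)^2/8$, which is exactly the claimed bound. Applying this with $Y = X_i - \mathbb{E}[X_i]$ (whose range has the same length $b_i - a_i$) bounds each factor by $\exp\bigl(s^2 (b_i - a_i)^2 / 8\bigr)$.

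Collecting the factors, the tail estimate becomes $\mathbb{P}[S - \mathbb{E}[S] \ge m\epsilon] \le \exp\bigl(-s m \epsilon + s^2 v / 8\bigr)$, and I would finish by choosing $s$ to minimize the exponent. The optimal value $s = 4 m \epsilon / v$ produces the exponent $-2 m^2 \epsilon^2 / v$, giving precisely the stated inequality $\mathbb{P}[\bar{X}_m - \mathbb{E}[\bar{X}_m] \ge \epsilon] \le \exp\bigl(-2 m^2 \epsilon^2 / \sum_{i=1}^m (b_i - a_i)^2\bigr)$. The equivalent high-probability form follows by setting the right-hand side equal to $\delta$ and solving for $\epsilon$, namely $\epsilon = \sqrt{\tfrac{1}{2m^2}\bigl(\sum_{i=1}^m (b_i - a_i)^2\bigr)\log(1/\delta)}$. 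Finally, the symmetric statement for $\mathbb{E}[\bar{X}_m] - \bar{X}_m$ is obtained by running the identical argument with each $X_i$ replaced by $-X_i$, which leaves $v$ unchanged.
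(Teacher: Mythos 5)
Your proof is correct. Note, however, that the paper does not prove this lemma at all: it is quoted verbatim as a classical result, with a citation to Boucheron--Lugosi--Bousquet, and is used as a black box in the proof of the batch LTL bound (to control the term $\hat{\E}_\mdata(D_*) - \hat{\E}(D_*)$). What you have written is the standard textbook derivation --- Chernoff bounding via Markov's inequality on the exponentiated sum, factorization of the moment generating function by independence, Hoeffding's lemma $\mathbb{E}[e^{sY}] \le e^{s^2(b-a)^2/8}$ proved by the chord/convexity argument with the second-order Taylor bound $\psi''(s)\le (b-a)^2/4$, optimization of the exponent at $s = 4m\epsilon/v$, and inversion of the tail bound to get the high-probability form --- which is precisely the argument in the cited reference, so your write-up is a faithful self-contained reconstruction of what the paper delegates to the literature. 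All the computations check out, including the observation that passing to $-X_i$ preserves the range lengths $b_i - a_i$ and hence yields the symmetric inequality.
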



\section{NON-ASYMPTOTIC RATES for PROJECTED STOCHASTIC SUBGRADIENT ALGORITHM}

In this section, we briefly describe how to derive non-asymptotic
convergence rates in probability for Projected Stochastic Subgradient Algorithm 
(PSSA), exploiting the regret bounds for Projected Online Subgradient 
Algorithm (POSA). In the first part we give a regret
bound for POSA and we specialize it to Algorithm~\ref{alg:general-pssa} 
for the case of the square loss (\autoref{lem:regret-bound}). In the second 
part we first show, in general, how 
a bound on the regret implies a rate in probability for the convergence
in the statistical setting and then we specialize this result to obtain the 
bound on the excess empirical future risk of the output of 
Algorithm~\ref{alg:general-pssa} for the case of the square loss 
(\autoref{prop:bound-empirical-transfer-error}). 
The results contained in this section are standard, 
we will cite during the presentation some references where the interested
reader can find more details. Throughout this section, no differentiability 
assumptions on the functions will be made, 
we only require them to be convex and Lipschitz. We also require the 
boundedness of the diameter of the set over which we optimize. The 
general analysis will be conducted in a Hilbert space with scalar 
product $\langle \cdot, \cdot \rangle$ and induced norm $\norma{\cdot}$. 

\subsection{Projected Online Subgradient Algorithm, POSA}
\label{app:D1}

The Online Convex Optimization (OCO) framework \cite{hazan2016introduction} 
over a convex and closed set $H$ of a Hilbert space can be seen as a 
repeated game: at iteration $t$, the online player, i.e. the online
algorithm, chooses $h^{(t)} \in H$, after this, a cost function 
$f_t: H \to \mathbb{R}$ is revealed by the adversary and the cost 
incurred by the online player is $f_t(h^{(t)})$. 
The cost functions $f_t$ are usually assumed to be bounded 
convex functions over $H$, belonging to some bounded 
family of functions and they could be even adversely chosen. 
The performance of an online algorithm  
over a total number of game iterations $T$ 
is measured by its $regret$, defined as the difference between the 
total averaged cost the algorithm incurred over $T$ matches and 
that of the best fixed decision in hindsight:
\begin{equation}
R_T = \frac{1}{T} \sum_{t = 1}^T f_t(h^{(t)}) - \min_{h \in H} 
\frac{1}{T} \sum_{t = 1}^T f_t(h).
\end{equation}
In the sequel, we will always assume the convexity of the functions
$f_t$ and the existence of a minimizer of the batch 
problem $\hat{h} \in \arg \min_{h \in H} \sum_{t = 1}^T f_t(h)$.
In our case, we will focus on the classical Projected Online Subgradient Algorithm  
described in Algorithm~\ref{OGDA} and we will give an upper bound on its regret.
When needed, the following assumptions will be made. 

\begin{assumption} 
Assume that for any $t$ the functions $f_t$ are $G$-Lipschitz on $H$,
i.e. there exists a positive constant such that $\norma{u} \le G$ for any
$u \in \partial f_t(h)$ and for any $h \in H$.
\label{ass_Lipschitz}
\end{assumption}

\begin{assumption}
Assume that the diameter of the set $H$ is 
bounded by some constant $\mathcal{D}> 0$, i.e.
$\displaystyle \sup_{h, h' \in H} \norma{h -h'} \le \mathcal{D}$.
\label{ass_diameter}
\end{assumption}

The following theorem is a classical result and a
slightly different version can be found in \cite[Thm.~3.1]{hazan2016introduction}, we report 
here the proof because of clarity and completeness.

\begin{algorithm}[t]
\caption{POSA}\label{OGDA}
\begin{algorithmic}
\State ~
   \State {\bfseries Input:} $T\in \mathbb{N}$ number of iterations, $\{\gamma_t \}_t$ step sizes 
   \State {\bfseries Initialization:} $h^{(1)} \in H$
   \State {\bfseries For} $t=1$ to $T$
   \State \qquad Receive ~~ $f_t $, pay $f_t(h^{(t)})$
   \State \qquad Choose ~~ $u_t \in \partial f_t(h^{(t)})$
   \State \qquad Update ~~ $h^{(t+1)} = {\rm proj}_{H} 
(h^{(t)} - \gamma_t u_t)$
 \State {\bfseries Return $h^{(T)}$} 
\State ~
\end{algorithmic}
\end{algorithm}

\begin{theoremshortref}[Regret Bound for Algorithm~\ref{OGDA}] \label{regret_OGDA}
Under \autoref{ass_Lipschitz} and \autoref{ass_diameter}, the regret of 
Algorithm~\ref{OGDA}, with $\gamma_t = c/\sqrt{t}$ for some $c > 0$, 
is  bounded by
\begin{equation}
R_T \le \frac{1}{2} \Bigl( \frac{\mathcal{D}^2}{c} + 2 c G^2 \Bigr) \frac{1}{\sqrt{T}}.
\end{equation}
Moreover, the optimal value for the previous bound, attained at $c = \displaystyle \frac{\mathcal{D}}{\sqrt{2} G}$, is
$\displaystyle R_T \le \frac{\sqrt{2} \mathcal{D} G}{\sqrt{T}}$. 
\begin{proof}
Since $u_t \in \partial f_t(h^{(t)})$, by convexity of $f_t$
and definition of subgradient, we have that:
\begin{equation}
f_t(h^{(t)}) - f_t(\hat{h}) \le \langle u_t, h^{(t)} - \hat{h} \rangle.
\label{1}
\end{equation}
Using the update rule of Algorithm~\ref{OGDA}, Pythagorean Theorem (i.e. the
non-expansiveness property of the projection operator) and 
\autoref{ass_Lipschitz}, the following relations hold:
\begin{equation}
\begin{split}
\norma{h^{(t+1)} - \hat{h}}^2 & = \norma{{\rm{proj}}_H (h^{(t)} - \gamma_t u_t) - \hat{h}}^2 \\
& \le \norma{h^{(t)} - \gamma_t u_t - \hat{h} }^2 \\
& = \norma{h^{(t)} - \hat{h}}^2 - 2 \gamma_t \langle u_t, h^{(t)} - \hat{h} \rangle + 
\gamma_t ^2 \norma{u_t}^2 \\
& \le \norma{h^{(t)} - \hat{h}}^2 - 2 \gamma_t \langle u_t, h^{(t)} - \hat{h} \rangle + 
\gamma_t ^2 G^2,
\end{split}
\end{equation}
which imply that
\begin{equation}
\langle u_t, h^{(t)} - \hat{h} \rangle  \le \frac{\norma{h^{(t)} - \hat{h}}^2 - 
\norma{h^{(t + 1)} - \hat{h}}^2}{2 \gamma_t} + \frac{\gamma_t G^2}{2}.
\label{2}
\end{equation}
Combining Eq.~\eqref{2} with Eq.~\eqref{1}, we obtain:
\begin{equation}
f_t(h^{(t)}) - f_t(\hat{h}) \le \frac{\norma{h^{(t)} - \hat{h}}^2}{2 \gamma_t} 
- \frac{\norma{h^{(t + 1)} - \hat{h}}^2}{2 \gamma_t}+ \frac{\gamma_t G^2}{2}.
\label{3}
\end{equation}
Now, summing Eq.~\eqref{3} from $t = 1$ to $t = T$, using the convention $1/\gamma_0 = 0$,
and setting $\gamma_t = c/\sqrt{t}$ we can write:
\begin{equation}
\begin{split}
\sum_{t = 1}^T \Bigl( f_t(h^{(t)}) - f_t(\hat{h}) \Bigr) & \le 
\frac{1}{2} \sum_{t = 1}^T \frac{\norma{h^{(t)} - \hat{h}}^2}{\gamma_t} 
- \frac{1}{2} \sum_{t = 1}^T \frac{\norma{h^{(t + 1)} - \hat{h}}^2}{\gamma_t} +
\frac{G^2}{2} \sum_{t = 1}^T \gamma_t \\
& = \frac{1}{2} \sum_{t = 1}^T \frac{\norma{h^{(t)} - \hat{h}}^2}{\gamma_t} 
- \frac{1}{2} \sum_{t = 1}^T \frac{\norma{h^{(t)} - \hat{h}}^2}{\gamma_{t-1}} 
- \frac{1}{2} \frac{\norma{h^{(T + 1)} - \hat{h}}^2}{\gamma_T} +
\frac{G^2}{2} \sum_{t = 1}^T \gamma_t \\
& \le \frac{1}{2} \sum_{t = 1}^T \Bigl(\frac{1}{\gamma_t} - \frac{1}{\gamma_{t-1}} \Bigr) 
\norma{h^{(t)} - \hat{h}}^2 + \frac{G^2}{2} \sum_{t = 1}^T \gamma_t \\
& \le \frac{1}{2} \Bigl( \frac{\mathcal{D}^2}{\gamma_T} + {G^2}\sum_{t = 1}^T \gamma_t \Bigr)
\le \frac{1}{2} \Bigl( \frac{\mathcal{D}^2}{c} + 2 c G^2 \Bigr) \sqrt{T},
\end{split}
\end{equation}
where we have exploited \autoref{ass_diameter}, more precisely $\norma{h^{(t)} - \hat{h}} \le \mathcal{D}$, the fact that $\sum_{t = 1}^T \Bigl(\frac{1}{\gamma_t} - \frac{1}{\gamma_{t-1}} \Bigr) = \frac{1}{\gamma_T}$ and the inequality $\sum_{t = 1}^T \frac{1}{\sqrt{t}} \le 2 \sqrt{T} - 1 \le 2 \sqrt{T}$. Dividing by $T$ and optimizing with respect to $c$, the result follows.
\end{proof}
\end{theoremshortref}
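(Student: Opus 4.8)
The plan is to carry out the standard per-round analysis of projected online subgradient descent and then telescope. Fix any comparator $\hat{h} \in \arg\min_{h\in H}\sum_{t=1}^T f_t(h)$, which exists by assumption. First I would exploit the convexity of each $f_t$: since $u_t \in \partial f_t(h^{(t)})$, the subgradient inequality gives $f_t(h^{(t)}) - f_t(\hat{h}) \le \langle u_t, h^{(t)} - \hat{h}\rangle$, converting the cumulative regret into a sum of inner products that can be tracked geometrically.

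Next I would relate these inner products to the distance from the iterate to the comparator. Using the update rule $h^{(t+1)} = \mathrm{proj}_H(h^{(t)} - \gamma_t u_t)$ together with the non-expansiveness of the projection onto the convex set $H$ (the Pythagorean inequality), I would bound $\norma{h^{(t+1)} - \hat{h}}^2 \le \norma{h^{(t)} - \gamma_t u_t - \hat{h}}^2$. Expanding the right-hand side and invoking the Lipschitz hypothesis (\autoref{ass_Lipschitz}) to control $\norma{u_t}^2 \le G^2$ yields, after isolating the inner product, the key per-round inequality
\begin{equation}
f_t(h^{(t)}) - f_t(\hat{h}) \le \frac{\norma{h^{(t)} - \hat{h}}^2 - \norma{h^{(t+1)} - \hat{h}}^2}{2\gamma_t} + \frac{\gamma_t G^2}{2}.
\end{equation}

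I would then sum this over $t = 1,\dots,T$. The main obstacle is that the leading terms do not telescope directly, because the step sizes $\gamma_t$ vary with $t$ and the clean cancellation of the distance terms only occurs for constant weights. To handle this I would perform an Abel-type rearrangement, regrouping the sum as $\tfrac{1}{2}\sum_{t=1}^T (1/\gamma_t - 1/\gamma_{t-1})\norma{h^{(t)} - \hat{h}}^2$ with the convention $1/\gamma_0 = 0$, while discarding the nonpositive boundary term $-\norma{h^{(T+1)} - \hat{h}}^2/(2\gamma_T)$. Because $\gamma_t = c/\sqrt{t}$ is decreasing, each weight $1/\gamma_t - 1/\gamma_{t-1}$ is nonnegative, so I may uniformly bound $\norma{h^{(t)} - \hat{h}}^2 \le \mathcal{D}^2$ using the diameter hypothesis (\autoref{ass_diameter}); the weights then telescope to $1/\gamma_T = \sqrt{T}/c$.

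Finally I would control the remaining sum via $\sum_{t=1}^T \gamma_t = c\sum_{t=1}^T t^{-1/2} \le 2c\sqrt{T}$, combine the two contributions to obtain $\sum_{t=1}^T \bigl(f_t(h^{(t)}) - f_t(\hat{h})\bigr) \le \tfrac{1}{2}\bigl(\mathcal{D}^2/c + 2cG^2\bigr)\sqrt{T}$, and divide by $T$ to reach the stated regret bound. The optimal constant then follows by minimizing $c \mapsto \mathcal{D}^2/c + 2cG^2$ over $c>0$; setting the derivative to zero gives $c = \mathcal{D}/(\sqrt{2}G)$, and substituting back makes the two terms equal and collapses the bound to the symmetric value $\sqrt{2}\,\mathcal{D}G/\sqrt{T}$.
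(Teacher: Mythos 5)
Your proposal is correct and follows essentially the same argument as the paper's proof: the subgradient inequality, non-expansiveness of the projection, the per-round bound, the Abel-type regrouping with the convention $1/\gamma_0 = 0$ (including the observation that the weights $1/\gamma_t - 1/\gamma_{t-1}$ are nonnegative so the diameter bound applies), the estimate $\sum_{t=1}^T t^{-1/2} \le 2\sqrt{T}$, and the final optimization over $c$. No gaps; the two proofs coincide step for step.
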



We now specialize the regret bound obtained for the generic Algorithm~\ref{OGDA} to
our Algorithm~\ref{alg:general-pssa} described in the paper for the square loss.

\regretboundour*
\begin{proof} 
The thesis follows from applying \autoref{regret_OGDA} 
to the context of Algorithm~\ref{alg:general-pssa} {with the square loss}. In this case 
the iteration $h^{(t)}$ coincide with $D^{(t)}$, the cost 
functions are identified with $f_t = \L_{Z_t}$, hence they 
are $2$-Lipschitz thanks to \autoref{prop:least-squares}-(3)
and, consequently, we can take $G = 2$ in \autoref{regret_OGDA}. 
Moreover, the diameter $\mathcal{D}$ of the set over which 
we project $\Dla$ (in the previous notation $H$) is $2/\lambda$. Indeed, for any $D \in \Dla$ we have that 
$\norma{D}_2 \le \norma{D}_1 = {\tr} (D) \le 1/\lambda$, hence 
$\mathcal{D} = \sup_{D, D' \in \Dla} \norma{D - D'}_2 \le 2/\lambda$. 
\end{proof}


\subsection{Online-to-Batch Conversion}
\label{app:D2}
Consider a collection of data points $\{Z_t \}_t$ belonging to
some space and let $\eta$ be a probability distribution over it. 
In the sequel of the discussion we will ignore all measurability issues.
Let $H$ be a set as above and for every 
$h \in H$ define $F(h) = \mathbb{E}_{Z \sim \eta} \bigl[ \L_Z(h) \bigr]$,
where, for any $Z$, $\L_Z$ is a convex function. In the following we will 
consider the optimization problem
\begin{equation}
\min_{h \in H} F(h)
\label{probl_stoch}
\end{equation}
and we will assume the existence of a minimizer $h_* \in \arg \min_{h \in H} F(h)$. 
In order to solve the stochastic problem in Eq.~\eqref{probl_stoch}, we will analyze 
the general incremental procedure described in Algorithm~\ref{OA}, where the next point is 
updated by some rule depending on the past history of the process, for instance, if we 
choose the update $h^{(t+1)} = {\rm{proj}}_H (h^{(t)} - \gamma_t u_t)$, for some 
$\gamma_t > 0$ and $u_t \in \partial f_t(h^{(t)})$, then Algorithm~\ref{OA} coincides 
with POSA (Algorithm~\ref{OGDA}) applied to the functions $f_t = \L_{Z_t}$. 

\begin{algorithm}[t]
\caption{Generic Incremental Procedure in the Online and Statistical Settings} \label{OA}
\begin{multicols}{2}
\begin{algorithmic}
\State ONLINE SETTING
\State ~
   \State {\bfseries Input:} $T\in \mathbb{N}$ number of iterations, $\{\gamma_t \}_t$ step sizes 
   \State {\bfseries Initialization:} $h^{(1)} \in H$
   \State {\bfseries For} $t=1$ to $T$:
   \State \qquad Receive ~~ $Z_t$ $\longrightarrow$ \textbf{no further assumptions}
   \State \qquad Define ~~ $f_t = \L_{Z_t}$, pay $f_t(h^{(t)})$
   \State \qquad Update ~~ $h^{(t+1)}$
 \State {\bfseries Return $h^{(T)}$} 
\State ~
\State STATISTICAL SETTING
\State ~
   \State {\bfseries Input:} $T\in \mathbb{N}$ number of iterations, $\{\gamma_t \}_t$ step sizes 
   \State {\bfseries Initialization:} $h^{(1)} \in H$
   \State {\bfseries For} $t=1$ to $T$:
   \State \qquad Receive ~~ $Z_t$ $\longrightarrow$ \textbf{sampled i.i.d. from \bm{$\eta$}}
   \State \qquad Define ~~ $f_t = \L_{Z_t}$, pay $f_t(h^{(t)})$
   \State \qquad Update ~~ $h^{(t+1)}$
 \State {\bfseries Return $\bar{h}_T = \frac{1}{T} \sum_{t = 1}^T h^{(t)}$} 
\State ~
\end{algorithmic}
\end{multicols}
\end{algorithm}

In the online setting no further assumptions about the data are made, however, in 
the statistical setting we typically assume that the data are i.i.d. from the distribution 
$\eta$; since this last setting is more restrictive, one would expect that  
if Algorithm~\ref{OA} solves the problem in the online framework, i.e. if its regret $R_T$ 
is such that $R_T \to 0$ as $T \to \infty$, then it will also solve the corresponding problem 
\eqref{probl_stoch} in the statistical setting. This statement is formally confirmed by the following
theorem, which relies on results taken from \cite{cesa2004generalization}.

\begin{theoremshortref}[Theorem $9.3$ in \cite{hazan2016introduction}]\label{sgd_rate_Hazan}
{Let $f_t = \L_{Z_t}$ be convex functions with values in $[0,1]$ for any $Z_t$,
$t \in \{1, \dots, T\}$} and let the points $\{ Z_t \}_{t = 1}^T$ processed by 
Algorithm~\ref{OA} be i.i.d. from $\eta$. Then, denoting by $R_T$
the regret bound of Algorithm~\ref{OA}, for any $\delta \in (0,1]$,
we have that
\begin{equation}
F(\bar{h}_T) - F(h_*) \le R_T + \sqrt{\frac{8 \rm{log}(2/\delta)}{T}}
\end{equation}
with probability at least $1 - \delta$ in the sampling of the points $\{Z_t \}_{t =1}^T$.
\end{theoremshortref}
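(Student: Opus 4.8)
The plan is to combine three ingredients: the convexity of $F$ (through Jensen's inequality), the definition of the regret $R_T$, and a martingale concentration inequality. First I would note that $F(h) = \mathbb{E}_{Z\sim\eta}[\L_Z(h)]$ is convex, being an average of the convex functions $\L_Z$, and takes values in $[0,1]$ since each $\L_Z$ does. As $\bar h_T = \frac{1}{T}\sum_{t=1}^T h^{(t)}$, Jensen's inequality gives $F(\bar h_T)\le \frac{1}{T}\sum_{t=1}^T F(h^{(t)})$. I would then add and subtract the online losses $f_t(h^{(t)})=\L_{Z_t}(h^{(t)})$ and the comparator losses $f_t(h_*)$, using the elementary bound $\min_{h\in H}\frac{1}{T}\sum_t f_t(h)\le \frac{1}{T}\sum_t f_t(h_*)$ together with the definition of $R_T$, to arrive at
\begin{equation}
F(\bar h_T) - F(h_*) \le R_T + \frac{1}{T}\sum_{t=1}^T W_t,
\qquad
W_t = \big(F(h^{(t)})-f_t(h^{(t)})\big)+\big(f_t(h_*)-F(h_*)\big).
\end{equation}

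The heart of the argument is to show that $\{W_t\}$ is a martingale difference sequence with respect to the filtration $\mathcal{F}_{t-1}=\sigma(Z_1,\dots,Z_{t-1})$ and to bound its range. Since the iterate $h^{(t)}$ is produced by Algorithm~\ref{OA} before $Z_t$ is revealed, it is $\mathcal{F}_{t-1}$-measurable; because $Z_t$ is drawn i.i.d.\ from $\eta$ and independent of the past, $\mathbb{E}[f_t(h^{(t)})\mid\mathcal{F}_{t-1}]=F(h^{(t)})$. Likewise $h_*$ is deterministic, as it minimizes the population risk $F$ rather than any empirical quantity, so $\mathbb{E}[f_t(h_*)\mid\mathcal{F}_{t-1}]=F(h_*)$. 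Hence $\mathbb{E}[W_t\mid\mathcal{F}_{t-1}]=0$. Since every $f_t$, and therefore $F$, is $[0,1]$-valued, each of the two brackets defining $W_t$ lies in $[-1,1]$, so $|W_t|\le 2$ almost surely.

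Finally I would invoke the two-sided Azuma--Hoeffding inequality for $\{W_t\}$ with the bound $c=2$, namely $\mathbb{P}\big[\sum_{t=1}^T W_t\ge s\big]\le 2\exp\!\big(-s^2/(2Tc^2)\big)$. Setting $s=T\epsilon$ and equating the right-hand side to $\delta$ gives $\epsilon=\sqrt{8\log(2/\delta)/T}$, so that $\frac{1}{T}\sum_{t=1}^T W_t\le \sqrt{8\log(2/\delta)/T}$ with probability at least $1-\delta$. Substituting this into the displayed decomposition yields the stated bound.

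The main obstacle is the martingale verification: one must be careful that $h^{(t)}$ is chosen strictly before $Z_t$ is observed and that $h_*$ carries no dependence on the realized sample, since these two facts are exactly what make the conditional expectations of the two brackets vanish. The choice to treat the \emph{combined} increment $W_t$ with range $2$, rather than bounding the two sums separately via a union bound, is also what produces the clean constant $8$ in the rate.
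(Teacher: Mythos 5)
Your proof is correct and is essentially the standard argument behind this result: Jensen's inequality on the averaged iterate, the regret to handle the empirical comparison, and Azuma--Hoeffding applied to the martingale difference sequence $W_t$ (with the range $2$ and two-sided tail producing exactly the constant $\sqrt{8\log(2/\delta)}$). The paper itself does not reproduce a proof --- it delegates to Theorem~9.3 of \cite{hazan2016introduction} and the martingale analysis of \cite{cesa2004generalization} --- and your argument faithfully reconstructs that cited proof, including the key measurability points that $h^{(t)}$ precedes $Z_t$ and that $h_*$ is a deterministic minimizer of $F$.
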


The previous theorem relies on the theory of Martingales \cite{grimmett2001probability} 
and the analysis of the first term $\frac{1}{T} \sum_{t = 1}^T 
f_t(h^{(t)})$ of the regret (\cite{cesa2004generalization}), in fact this term
is a data-dependent statistics evaluating the average cumulative error 
of the prediction $h^{(t)}$ of the algorithm on the next point $Z_t$, 
therefore it is reasonable to expect that it contains information about 
the generalization ability of the algorithm.

Adapting the previous discussion to the setting of Algorithm~\ref{alg:general-pssa} for the
square loss, we obtain the following rate for the excess empirical future risk of 
online estimator returned by the algorithm.

\sgdrateour*

\begin{proof} The statement directly follows by combining \autoref{sgd_rate_Hazan} with 
the regret bound in \autoref{lem:regret-bound} to the context of Algorithm~\ref{alg:general-pssa}
for the square loss: we identify the set $H$ with the set $\Dla$, the output $\bar{h}_T$ with
the online estimator $\bar{D}_T$, the expectation $\mathbb{E}_{\eta}$ with 
$\mathbb{E}_{\mu \sim \rho} \mathbb{E}_{Z \sim \mu^n}$
and the function $F$ with the future empirical risk $\hat \E$, the remaining identifications
are obvious. We remark that, thanks to \autoref{prop:least-squares}-(5), the boundedness 
condition on the functions $\L_{Z_t}$ needed in order to apply \autoref{sgd_rate_Hazan}, is satisfied in our setting.
\end{proof}


\section{PROJECTION ON THE SET \texorpdfstring{$\Dla$}{Dl} }
\label{app:E}
In the following lemma we describe how to perform the projection over the set $\Dla$ in a finite number of steps.
Without loss of generality we consider the case that $\lambda=1$, the case regarding a general value of $\lambda$ 
immediately follows by a rescaling argument.
\begin{lemma}
Let $Q$ be a $d \times d$ symmetric matrix and let $U\Gamma U\trans$ be an {eigen}-decomposition of $Q$, with $\Gamma = {\rm Diag}(\gamma_1,\dots,\gamma_d)$. Then the solution of the problem
\begin{equation*} \label{lemma_proj}
{\hat D} = {\rm argmin} \Big \{ \||D-Q\||_2^2: D \succeq 0, {\tr}(D) \leq 1 \Big \}
\end{equation*}
is given by 
${\hat D} = {Q}$ if $Q$ satisfies the constraints and ${\hat D}=U \Theta U\trans$ otherwise, where $\Theta = {\rm Diag}(\theta_1,\dots,\theta_d)$, $\theta_i = \max(0, \gamma_i - a)$ for $i \in \{1,\dots,d\}$, and the nonnegative parameter $a$ is uniquely defined by the equation $\sum_{i=1}^d  \max(0, \gamma_i - a) = 1$.
\end{lemma}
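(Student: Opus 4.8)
The plan is to exploit the unitary invariance of the Frobenius norm together with the purely spectral nature of the constraints in order to collapse the matrix problem to a one-dimensional problem in the eigenvalues, which I can then dispatch with the Karush--Kuhn--Tucker (KKT) conditions. Since the feasible set $\{D : D \succeq 0,\ \tr(D) \le 1\}$ is closed and convex and the objective $\norma{D - Q}_2^2$ is strictly convex, a unique minimizer $\hat D$ exists, so it suffices to exhibit one feasible point satisfying the optimality conditions.

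First I would expand $\norma{D - Q}_2^2 = \norma{D}_2^2 - 2\,\tr(D Q) + \norma{Q}_2^2$ and note that both $\norma{D}_2^2 = \sum_i \lambda_i(D)^2$ and the feasibility of $D$ depend only on the spectrum of $D$. Hence, for any prescribed spectrum, minimizing the objective amounts to maximizing $\tr(D Q)$ over all orthogonal conjugations of $D$. By von Neumann's trace inequality this maximum equals $\sum_i \lambda_i^\downarrow(D)\,\lambda_i^\downarrow(Q)$ and is attained exactly when $D$ and $Q$ are simultaneously diagonalizable with their eigenvalues listed in the same (decreasing) order. This forces $\hat D = U\Theta U\trans$ with $U$ the eigenvectors of $Q$ and $\Theta = \mathrm{Diag}(\theta_1,\dots,\theta_d)$; moreover the eventual choice $\theta_i = \max(0,\gamma_i - a)$ is order-preserving, hence consistent with the equality condition.

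With this reduction the problem becomes the scalar program $\min \sum_{i=1}^d (\theta_i - \gamma_i)^2$ subject to $\theta_i \ge 0$ and $\sum_i \theta_i \le 1$. If $Q$ already satisfies the constraints, i.e. $\gamma_i \ge 0$ for all $i$ and $\sum_i \gamma_i \le 1$, then $\theta = \gamma$ is feasible and optimal, giving $\hat D = Q$. Otherwise I would write the KKT system with multiplier $a \ge 0$ for $\sum_i \theta_i \le 1$ and $\beta_i \ge 0$ for $\theta_i \ge 0$: stationarity yields $\theta_i = \gamma_i - a + \beta_i$, and complementary slackness ($\beta_i \theta_i = 0$) collapses this to $\theta_i = \max(0, \gamma_i - a)$, exactly as claimed. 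Since $Q$ is infeasible the trace constraint is binding, so $a > 0$ and $\sum_i \theta_i = 1$, which is precisely the defining equation $\sum_{i=1}^d \max(0, \gamma_i - a) = 1$.

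It remains to check that this equation pins down $a$ uniquely. I would observe that $g(a) = \sum_{i=1}^d \max(0, \gamma_i - a)$ is continuous, nonincreasing, piecewise linear, and strictly decreasing wherever it is positive, with $g(a) \to 0$ as $a \to \infty$ and $g(0) = \sum_i \max(0,\gamma_i)$; in the regime where the trace constraint is binding one has $g(0) \ge 1$, so the intermediate value theorem delivers a unique $a \ge 0$ solving $g(a) = 1$. The main obstacle I anticipate is the spectral reduction of the first two paragraphs: the only genuinely non-routine step is the equality case of von Neumann's trace inequality, used to force $\hat D$ and $Q$ to share an eigenbasis with matching eigenvalue order, after which the KKT computation and the monotonicity argument for $a$ are entirely standard. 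Finally, the restriction to $\lambda = 1$ in the statement is without loss of generality, since the substitution $D = \lambda^{-1} D'$ turns $\tr(D) \le 1/\lambda$ into $\tr(D') \le 1$ and merely rescales the objective by $\lambda^{-2}$.
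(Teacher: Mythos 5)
Your overall route---reduce the matrix problem to its eigenvalues via von Neumann's trace inequality, then solve the resulting vector problem by KKT/Lagrange multipliers, and pin down $a$ by monotonicity---is exactly the path the paper indicates (the paper only sketches this, deferring to \cite[Theorem~15]{Andrew}). However, one step in your execution fails: the claim that ``since $Q$ is infeasible the trace constraint is binding, so $a>0$ and $\sum_i \theta_i = 1$.'' Infeasibility of $Q$ can be caused by a negative eigenvalue alone, in which case the trace constraint need not bind at the optimum. Take $d=2$ and $Q = \mathrm{Diag}(-1/2,\,1/3)$: $Q$ is not PSD, hence infeasible, yet for any feasible $D$ one has $\|D-Q\|_2^2 = (D_{11}+1/2)^2 + 2D_{12}^2 + (D_{22}-1/3)^2 \ge 1/4$ (using $D_{11}\ge 0$), with equality only at $\hat D = \mathrm{Diag}(0,\,1/3)$. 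So the projection has $\mathrm{tr}(\hat D) = 1/3 < 1$, the multiplier is $a=0$, and the equation $\sum_{i=1}^d \max(0,\gamma_i - a) = 1$ has \emph{no} nonnegative solution, since $g(a) = \sum_i \max(0,\gamma_i-a) \le g(0) = 1/3 < 1$ for all $a \ge 0$. Your appeal to the intermediate value theorem presumes $g(0)\ge 1$, which you justify only ``in the regime where the trace constraint is binding''---but that regime is not the same as the regime where $Q$ is infeasible, so the argument is circular exactly where it matters.

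The correct dichotomy, which your own KKT computation in fact delivers, is keyed to $g(0) = \sum_i \max(0,\gamma_i)$ rather than to feasibility of $Q$: if $g(0) \le 1$, then $a = 0$ and $\hat D = U\,\mathrm{Diag}\bigl(\max(0,\gamma_1),\dots,\max(0,\gamma_d)\bigr)\,U\trans$ (the PSD part of $Q$, which equals $Q$ itself when $Q$ is feasible); if $g(0) > 1$, the trace constraint must bind---otherwise complementary slackness would force $a=0$ and hence $\sum_i\theta_i = g(0) > 1$, a contradiction---so $a>0$ is the unique root of $g(a)=1$. Note that this is not only a defect of your write-up: the lemma's statement itself lumps the case ``$Q$ not PSD but $g(0)\le 1$'' into the ``otherwise'' branch, where its defining equation for $a$ is unsolvable, so the statement needs the same repair. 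Once the case split is made on $g(0)$, the remaining ingredients of your proof---the von Neumann reduction with its equality case, the KKT derivation of $\theta_i = \max(0,\gamma_i - a)$, and the monotonicity argument for uniqueness of $a$---are all sound.
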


The proof of the above lemma follows a standard path of reducing the matrix problem to a vector problem by application of von Neumann trace inequality, after which an argument based on Lagrange multipliers is employed, see e.g. \cite[Theorem 15]{Andrew}. Note that the explicit equation defining the parameter $a$ can be solved efficiently in $O(d \log d)$ time, hence the computational cost of the projection is dominated by the computational cost $O(d^3)$ 
of performing the eigen-decomposition of $Q$.

\end{document}